\title{On the Expressive Power of \\ Deep Polynomial Neural Networks}
\author{%
Joe Kileel\thanks{Equal contribution.} \\
  Princeton University\\
  \And
  Matthew Trager\footnotemark[1]\\
  New York University\\
  \And
  Joan Bruna\\
  New York University\\
  }
\newtheorem{theorem}{Theorem}
\newtheorem{definition}[theorem]{Definition}
\newtheorem{remark}[theorem]{Remark}
\newtheorem{example}[theorem]{Example}
\newtheorem{conjecture}[theorem]{Conjecture}
\newtheorem*{claim*}{Claim}
\newcommand{\RR}{\mathbb{R}}
\newcommand{\PP}{\mathbb{P}}
\newcommand{\CC}{\mathbb{C}}
\newcommand{\ZZ}{\mathbb{Z}}
\newcommand{\FF}{\mathbb{F}}
\newcommand{\Sym}[2]{\textup{Sym}_{#1}(\RR^{#2})}
\newcommand{\V}[2]{\mathcal V_{\bm #1, #2}}
\newcommand{\F}[2]{\mathcal F_{\bm #1, #2}}
\newcommand{\eg}{{\em e.g.}}
\newcommand{\ie}{{\em i.e.}}
\begin{document}

\maketitle
\setcounter{footnote}{0}

\begin{abstract} We study deep neural networks with polynomial activations, particularly \textit{their expressive power}.  For a fixed architecture and activation degree, a polynomial neural network defines an algebraic map from weights to polynomials.  The image of this map is the functional space associated to the network, and it is an irreducible algebraic variety upon taking closure.  This paper proposes \textit{the dimension of this variety} as a precise measure of the expressive power of polynomial neural networks.  We obtain several theoretical results regarding this dimension as a function of architecture, including an exact formula for high activation degrees, as well as upper and lower bounds on layer widths in order for deep polynomials networks to fill the ambient functional space. We also present computational evidence that it is profitable in terms of expressiveness for layer widths to increase monotonically and then decrease monotonically.  Finally, we link our study to favorable optimization properties when training weights, and we draw  intriguing connections with tensor and polynomial decompositions.  
\end{abstract}

\section{Introduction}

A fundamental problem in the theory of deep learning is to study the \emph{functional space} of deep neural networks. A network can be modeled as a composition of elementary maps, however the family of all functions that can be obtained in this way is extremely complex. Many recent papers paint an accurate picture for the case of {shallow networks} (\eg, using mean field theory~\cite{chizat_global_2018, mei_mean_2018}) and of {deep linear networks}~\cite{arora_convergence_2018,arora_optimization_2018,kawaguchi_deep_2016}, however a similar investigation of \emph{deep nonlinear} networks appears to be significantly more challenging, and require very different tools.

In this paper, we consider a general model for \emph{deep polynomial neural networks}, where the activation function is a polynomial ($r$-th power) exponentiation. The advantage of this framework is that the functional space associated with a network architecture is \emph{algebraic}, so we can use tools from \textit{algebraic geometry}~\cite{harris_algebraic_1995} for a precise investigation of deep neural networks. Indeed, for a fixed activation degree $r$ and architecture $\bm d =(d_0,\ldots,d_h)$ (expressed as a sequence of widths), the family of all networks with varying weights can be identified with an \emph{algebraic variety} $\V{d}{r}$, embedded in a finite-dimensional Euclidean space. In this setting, an algebraic variety can be thought of as a manifold that may have singularities.

In this paper, our main object of study is the \emph{dimension} of $\V{d}{r}$ as a variety (in practice, as a manifold), which may be regarded as a precise measure of the architecture's expressiveness. Specifically, we prove that this dimension stabilizes when activations are high degree, and we provide an exact dimension formula for this case 
(Theorem \ref{thm:naive}). 
We also investigate conditions under which $\V{d}{r}$ \emph{fills} its ambient space. This question is important from the vantage point of optimization, since an architecture is ``filling'' if and only if it corresponds to a convex functional space (Proposition~\ref{prop:opt1}). In this direction, we prove a \textit{bottleneck property}, that if a width is not sufficiently large, the network can never fill the ambient space regardless of the size of other layers (Theorem~\ref{thm:bottleneck}).  

In a broader sense, our work introduces a powerful language and suite of mathematical tools for studying the geometry of network architectures. Although this setting requires polynomial activations, it may be used as a testing ground for more general situations and, \eg, to verify rules of thumb rigorously. Finally, our results show that polynomial neural networks are intimately related to the theory of \emph{tensor decompositions}~\cite{Landsberg-book}. In fact, representing a polynomial as a deep network corresponds to a type of decomposition of tensors which may be viewed as a composition of decompositions of a recently introduced sort \cite{LORS-2019}.  Using this connection, we establish general non-trivial upper bounds on filling widths (Theorem~\ref{thm:boundFilling}).
We believe that our work can serve as a step towards many interesting research challenges in developing the theoretical underpinnings of deep learning.

\subsection{Related work}

The study of the expressive power of neural networks dates back to seminal work on the universality of networks as function approximators~\cite{cybenko_approximation_1989,hornik_multilayer_1989}. More recently, there has been research supporting the hypothesis of ``depth efficiency'', \ie, the fact that deep networks can approximate functions more efficiently than shallow networks~\cite{delalleau2011shallow,martens_expressive_2014,cohen_expressive_2016,cohen_convolutional_2016}. Our paper differs from this line of work, in that we do not emphasize approximation properties, but rather the study of the functions that can be expressed \emph{exactly} using a network.

Most of the aforementioned studies make strong hypotheses on the network architecture. In particular,~\cite{delalleau2011shallow,martens_expressive_2014} focus on \emph{arithmetic circuits}, or \emph{sum-product  networks}~\cite{poon_sum-product_2012}. These are networks composed of units that compute either the product or a weighted sum of their inputs. In~\cite{cohen_expressive_2016}, the authors introduce a model of \emph{convolutional arithmetic circuits}. This is a particular class of arithmetic circuits that includes networks with layers of 1D convolutions and product pooling. This model does not allow for non-linear activations (beside the product pooling), although the follow-up paper~\cite{cohen_convolutional_2016} extends some results to ReLU activations with sum pooling. Interestingly, these networks are  related to Hierarchical Tucker (HT) decomposition of tensors.

The polynomial networks studied in this paper are not arithmetic circuits, but feedforward deep networks with polynomial $r$-th power activations. This is a vast generalization of a setting considered in several recent papers~\cite{venturi2018a,du_power_2018,soltanolkotabi_theoretical_2018}, that study shallow (two layer) networks with quadratic activations ($r=2$). These papers show that if the width of the intermediate layer is at least twice the input dimension, then the quadratic loss has no ``bad'' local minima. This result in line with our Proposition~\ref{prop:CvR}, which explains in this case the functional space is convex and \emph{fills} the ambient space. We also point out that polynomial activations are required for the functional space of the network to span a finite dimensional vector space~\cite{leshno_multilayer_1993,venturi2018a}.

The polynomial networks considered in this paper do not correspond to HT tensor decompositions as in~\cite{cohen_expressive_2016,cohen_convolutional_2016}, rather they are related to a different polynomial/tensor decomposition attracting very recent interest~\cite{FOS-PNAS, LORS-2019}.  These generalize usual decompositions, however their algorithmic and theoretical understanding are, mostly, wide open.  Neural networks motivate several questions in this vein.

\paragraph{Main contributions.}
Our main contributions can be summarized as follows.
\begin{itemize}
    \item We give a precise formulation of the expressiveness of polynomial networks in terms of the algebraic dimension of the functional space as an \emph{algebraic variety}. 
    \item We spell out the close, two-way relationship between polynomial networks and a particular family of decompositions of tensors.
    \item We prove several theoretical results on the functional space of polynomial networks. Notably, we give a formula for the dimension that holds for sufficiently high activation degrees (Theorem~\ref{thm:naive}) and we prove a tight lower bound on the width of the layers for the network to be ``filling'' in the functional space (Theorem~\ref{thm:bottleneck}).
\end{itemize}

\paragraph{Notation.}
We use $\Sym{d}{n}$ to denote the space of \emph{homogeneous polynomials} of degree $d$ in $n$ variables with coefficients in $\RR$. This set is a vector space over $\RR$ of dimension $N_{d,n} = \binom{n+d-1}{d}$, spanned by all monomials of degree $d$ in $n$ variables. In practice, $\Sym{d}{n}$ is isomorphic to $\RR^{N_{d,n}}$, and our networks will correspond to \emph{points} in this high dimensional space. The notation $\Sym{d}{n}$ expresses the fact that a polynomial of degree $d$ in $n$ variables can always be identified with a \emph{symmetric tensor} in $(\RR^n)^{\otimes d}$ that collects all of its coefficients.

\section{Basic setup}

A \emph{polynomial network} is a function $p_{\theta}: \RR^{d_0} \rightarrow \RR^{d_{h}}$ of the form
\begin{equation}
\begin{aligned}
    &p_{\theta}(x) = W_{h} \rho_r W_{h-1} \rho_r \ldots \rho_r W_1 x,
    \quad \,\, W_{i} \in \RR^{d_i \times d_{i-1}},
\end{aligned}
\end{equation}

where the \emph{activation} $\rho_r(z)$ raises all elements of $z$ to the $r$-th power ($r \in \mathbb{N}$). The parameters ${\theta} = (W_{h},\ldots,W_1) \in \RR^{d_{\theta}}$ (with $d_{\theta} = \sum_{i=1}^{h} d_i d_{i-1}$) are the network's \emph{weights}, and the network's \emph{architecture} is encoded by the sequence $\bm d=(d_{0},\ldots,d_{h})$ (specifying the \emph{depth} $h$ and  \emph{widths} $d_i$). Clearly, $p_{\theta}$ is a homogeneous polynomial mapping $\RR^{d_0} \rightarrow \RR^{d_{h}}$ of degree $r^{h-1}$, \ie, $p_{\theta} \in \Sym{r^{h-1}}{d_0}^{d_h}$. 

For fixed degree $r$ and architecture $\bm d=(d_{0},\ldots,d_{h})$, there exists an algebraic map
\begin{equation}\label{eq:phi_map}
    \Phi_{\bm d,r}:  {\theta} \mapsto p_{\theta} = 
    \begin{bmatrix}
    p_{{\theta}1}\\
    \vdots\\
    p_{{\theta} d_{h+1}}\\
    \end{bmatrix},
\end{equation}

where each $p_{{\theta} i}$ is a polynomial in $d_0$ variables. 
The image of $\Phi_{\bm d, r}$ is a set of vectors of polynomials, \ie, a subset $\mathcal F_{\bm d,r}$ of $\Sym{r^{h-1}}{d_0}^{d_h}$, and it is the \emph{functional space} represented by the network. 
In this paper, we consider the ``Zariski closure'' $\V{d}{r}=\overline{\mathcal F_{\bm d,r}}$ of the functional space.\footnote{The Zariski closure of a set $X$ is the smallest set containing $X$ that can be described by polynomial equations.} We refer to $\V{d}{r}$ as \emph{functional variety} of the network architecture, as it is in fact an irreducible \emph{algebraic variety}. In particular, $\V{d}{r}$ can be studied using powerful machinery from \emph{algebraic geometry}.

\vspace{.5em}

\begin{remark}\label{rem:zariski} The functional variety $\V{d}{r}$ may be significantly larger than the actual functional space ${\mathcal F_{\bm d,r}}$, since the Zariski closure is typically larger than the closure with respect to the standard the Euclidean topology. On the other hand, the \emph{dimensions} of
the spaces $\V{d}{r}$ and $\mathcal F_{\bm d,r}$ agree, and the set $\V{d}{r}$ is usually ``nicer'' (it can be described by polynomial equations, whereas an exact implicit description of $\F{d}{r}$ may require inequalities).
\end{remark}

\subsection{Examples}

We present some examples that describe the functional variety $\V{d}{r}$ in simple cases.

\begin{example}
A \emph{linear network} is a polynomial network with $r=1$. In this case, the network map $\Phi_{\bm d, r}: \RR^{d_{\theta}} \rightarrow \Sym{1}{d_0}^{d_h} \cong \RR^{d_h \times d_0}$ is simply matrix multiplication:
\begin{equation}
\theta = (W_h, W_{h-1}, \ldots, W_1) \mapsto p_{\theta} = W_h W_{h-1} \ldots W_{1} x. 
\end{equation}
The functional space $\mathcal{F}_{\bm d, r} \subseteq \RR^{d_{h} \times d_{0}}$ is the set of matrices with rank at most $d_{\min} = \min_i \{d_i\}$.  
This set is already characterized by polynomial equations, as the common zero set of all $(1+d_{\min}) \times (1+d_{\min})$ minors, so $\mathcal{F}_{\bm d, r} = \mathcal{V}_{\bm d, r}$ in this case.  
The dimension of $\mathcal{V}_{\bm d, r} \subset \RR^{d_{h} \times d_{0}}$ is $d_{\min}(d_0 + d_h - d_{\min})$.
\end{example}

\begin{example}
Consider $\bm{d}=(2,2,3)$ and $r=2$.  The input variables are $x = [x_1, x_2]^{T}$, and the parameters $\theta$ are the weights
\begin{equation}
W_1 = \begin{bmatrix}
w_{111} & w_{112} \\[0.15cm]
w_{121} & w_{122}
\end{bmatrix}, \hspace{0.15cm}
W_2 = \begin{bmatrix}
w_{211} & w_{212} \\[0.15cm]
w_{221} & w_{222} \\[0.15cm]
w_{231} & w_{232}
\end{bmatrix}.
\end{equation}
The network map $p_{\theta}$ 
is a triple of quadratic polynomials in $x_1, x_2$,
that can be written as
\begin{equation}\label{eq:network_example}
W_2 \rho_2 W_1 x \hspace{0.4em} = \hspace{0.32em}
\begin{bmatrix}
w_{211}(w_{111}x_{1} + w_{112}x_{2})^2 + w_{212}(w_{121}x_{1} + w_{122}x_{2})^2 \\[0.15cm]
w_{221}(w_{111}x_{1} + w_{112}x_{2})^2 + w_{222}(w_{121}x_{1} + w_{122}x_{2})^2\\[0.15cm]
w_{231}(w_{111}x_{1} + w_{112}x_{2})^2 + w_{232}(w_{121}x_{1} + w_{122}x_{2})^2
\end{bmatrix}.
\end{equation}

The map $\Phi_{(2,2,3), 2}$ in~\eqref{eq:phi_map} takes $W_1,W_2$ (that have $d_{\theta} = 10$ parameters) to the three quadratics in $x_1, x_2$ displayed above. The quadratics have a total of $\dim \Sym{2}{2}^3 = 9$ coefficients, however these coefficients are not arbitrary, \ie, not all possible triples of polynomials occur in the functional space. Writing $c_{ij}^{(k)}$ for the coefficient of $x_ix_j$ in $p_{\theta k}$ in~\eqref{eq:network_example} (with $k=1,2,3$ $i,j = 1,2$) then it is a simple exercise to show that
\begin{equation}
    \det \begin{bmatrix}
    c_{11}^{(1)} & c_{12}^{(1)} & c_{22}^{(1)}\\[0.4em]
    c_{11}^{(2)} & c_{12}^{(2)} & c_{22}^{(2)}\\[0.4em]
    c_{11}^{(3)} & c_{12}^{(3)} & c_{22}^{(3)}\\
    \end{bmatrix} = 0.
\end{equation}

This cubic equation describes the functional variety $\mathcal{V}_{(2,3,3),2}$, which is in this case an eight-dimensional subset (hypersurface) of $\Sym{2}{2}^3 \cong \RR^9$.
\end{example}

\subsection{Objectives}
The main goal of this paper is to study the~\emph{dimension} of $\V{d}{r}$ as the network's architecture $\bm d$ and the activation degree $r$ vary.  This dimension may be considered a \textit{precise} and \textit{intrinsic} measure of the polynomial network's \textit{expressivity}, quantifying degrees of freedom of the functional space.  For example, the dimension reflects the number of input/output pairs the network can interpolate, as each sample imposes one linear constraint on the variety $\mathcal{V}_{\bm d, r}$.

In general, the variety $\V{d}{r}$ lives in the ambient space $\Sym{r^{h-1}}{d_0}^{d_h}$, which in turn only depends on the activation degree $r$, network depth $h$, and the input/output dimensions $d_0$ and $d_h$. We are thus interested in the role of the intermediate widths in the dimension of $\V{d}{r}$.

\begin{definition} A network architecture $\bm d = (d_0,\ldots,d_h)$ has a \emph{filling functional variety} for the activation degree $r$ if $\V{d}{r} = \Sym{r^{h-1}}{d_0}^{d_h}$. 
\end{definition}

It is important to note that if the functional variety $\V{d}{r}$ is filling, then actual functional space $\mathcal F_{\bm d,r}$ (before taking closure) is in general only \emph{thick}, \ie, it has positive Lebesgue measure in $\Sym{r^{h-1}}{d_0}^{d_h}$ (see Remark~\ref{rem:zariski}).
On the other hand, given an architecture with a thick functional space, we can find another architecture whose functional space is the whole ambient space.

\begin{restatable}[Filling functional space]{proposition}{CvR}
\label{prop:CvR}
Fix $r$ and suppose  $\bm d = (d_0,d_1,\ldots,d_{h-1}, d_{h})$ has a filling functional variety $\V{d}{r}$.  Then the architecture $\bm d' = (d_0,2d_1,\ldots,2d_{h-1}, d_{h})$ has a \emph{filling functional space}, \ie, $\F{d'}{r} = \Sym{r^{h-1}}{d_0}^{d_h}$.
\end{restatable}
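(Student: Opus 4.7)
The plan is to exhibit every polynomial in $\Sym{r^{h-1}}{d_0}^{d_h}$ as the sum of two networks of architecture $\bm d$, and to realize such sums as actual outputs of networks of architecture $\bm d'$ through a block decomposition of the doubled weights.

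First I would establish the inclusion $\F{d'}{r} \supseteq \F{d}{r} + \F{d}{r}$. Given two parameter tuples $\theta_A = (A_h, \ldots, A_1)$ and $\theta_B = (B_h, \ldots, B_1)$ of architecture $\bm d$, assemble the $\bm d'$-weights as $W_1' = \begin{bmatrix} A_1 \\ B_1 \end{bmatrix}$, $W_h' = [A_h \mid B_h]$, and $W_i' = \begin{bmatrix} A_i & 0 \\ 0 & B_i \end{bmatrix}$ for $1 < i < h$. Since $\rho_r$ acts coordinatewise, it respects this block decomposition, and a direct computation yields $p_{\theta'} = p_{\theta_A} + p_{\theta_B}$.

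Next I would show that the Minkowski sum $\F{d}{r} + \F{d}{r}$ already equals the entire ambient space $\Sym{r^{h-1}}{d_0}^{d_h}$. Let $N = \dim \Sym{r^{h-1}}{d_0}^{d_h}$. Because $\V{d}{r}$ fills, the generic rank of the Jacobian of $\Phi_{\bm d, r}$ equals $N$, so at some point $\theta^{*}$ the map is a submersion; the submersion theorem yields a Euclidean-open neighborhood $U \subseteq \F{d}{r}$ of $\Phi_{\bm d, r}(\theta^{*})$. Scaling $W_h$ by any $\lambda \in \RR$ scales $p_\theta$ by $\lambda$, so $\F{d}{r}$ is closed under scalar multiplication; in particular $-U \subseteq \F{d}{r}$, and thus $U - U \subseteq \F{d}{r} + \F{d}{r}$ is a Euclidean-open neighborhood of the origin. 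Since $\F{d}{r} + \F{d}{r}$ is itself closed under scaling, for any target $p$ in the ambient space one may choose $\lambda$ large enough that $p/\lambda \in U - U$, write $p/\lambda = q_1 + q_2$ with $q_i \in \F{d}{r}$, and rescale to $p = \lambda q_1 + \lambda q_2 \in \F{d}{r} + \F{d}{r}$. Combining with the first step finishes the proof.

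The main obstacle I anticipate is precisely the passage from the Zariski-closure hypothesis $\V{d}{r} = \overline{\F{d}{r}}$ being the full ambient space to the existence of a genuinely Euclidean-open subset inside $\F{d}{r}$ itself. This is exactly where the rank-dimension correspondence for polynomial maps and the submersion theorem do the heavy lifting; once such an open set is in hand, the rest is a short convexity-style argument exploiting the homogeneity of $p_\theta$ in the final weight matrix.
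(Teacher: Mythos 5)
Your proof is correct and follows essentially the same two-step strategy as the paper: realize sums of $\bm d$-networks inside $\bm d'$-networks by a block decomposition of the doubled weights, then exploit a nonempty Euclidean-open subset of $\F{d}{r}$ together with scaling (via $W_h$) to cover the whole ambient space. The only cosmetic difference is that the paper obtains the open subset by citing Chevalley's theorem (mimicking Blekherman--Teitler), whereas you derive it from the generic Jacobian rank and the submersion theorem -- two equally valid justifications for the same intermediate fact.
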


In summary, while an architecture with a filling functional variety may not necessarily have a filling functional space, it is sufficient to double all the intermediate widths for this stronger condition to hold.
As argued below, we expect  architectures with thick/filling functional spaces to have more favorable properties in terms of optimization and training. On the other hand, non-filling architectures may lead to interesting functional spaces for capturing patterns in data. In fact, we show in Section~\ref{sec:deep_tensors} that non-filling architectures generalize families of low-rank tensors.

\subsection{Connection to optimization}

The following two results illustrate that thick/filling functional spaces are helpful for optimization.

\begin{restatable}{proposition}{firstOpt}
\label{prop:opt1}
If the closure of a set $C \subset \RR^n$ is not convex, then there exists a convex function $f$ on $\RR^n$ whose restriction to $C$ has arbitrarily ``bad'' local minima (that is, there exist local minima whose value is arbitrarily larger than that of a global minimum). 
\end{restatable}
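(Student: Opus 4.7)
The plan is to construct a convex quadratic $f$ whose level sets osculate an exterior sphere tangent to $\overline C$ at a nearest boundary point: non-convexity guarantees the existence of such a sphere, and the scaling freedom in the construction will turn the resulting strict local minimum into an arbitrarily bad one.

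First, I would invoke non-convexity of $\overline C$ to pick $x_0, x_1 \in \overline C$ and $t \in (0, 1)$ with $x_t := (1-t) x_0 + t x_1 \notin \overline C$. Because $\overline C$ is closed, $\epsilon := \operatorname{dist}(x_t, \overline C) > 0$ is attained at some nearest point $p \in \overline C$, and with $\hat n := (x_t - p)/\epsilon$ the disjointness $\overline C \cap B(x_t, \epsilon) = \emptyset$ unpacks to the quadratic supporting-sphere bound $\langle \hat n, x - p\rangle \leq \|x - p\|^2/(2\epsilon)$ for every $x \in \overline C$. Introducing adapted coordinates $u := \langle \hat n, x - p\rangle$ and $w := P_{\hat n^\perp}(x - p)$, I would then work with
\begin{equation*}
f(x) := \alpha(u - s)^2 + \beta \|w\|^2, \qquad \alpha, \beta, s > 0, \quad \beta > \alpha s/\epsilon,
\end{equation*}
a (positive semidefinite, hence convex) quadratic. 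Substituting the supporting-sphere bound into $f(x) - f(p) = \alpha u^2 - 2\alpha s u + \beta \|w\|^2$ and splitting on the sign of $u$ yields $f(x) - f(p) \geq \alpha u^2 + (\beta - \alpha s/\epsilon)\|w\|^2 > 0$ for $x \in \overline C$ near $p$ with $x \neq p$, so $p$ is a strict local minimum of $f|_{\overline C}$.

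To make the gap $f(p) - \min_{\overline C} f$ arbitrary, I would drive $s \to \infty$ with $\beta = \alpha s/\epsilon + \delta$ for a fixed small $\delta > 0$. After replacing $x_0, x_1$ by the exit and entry points of $[x_0, x_1] \cap \overline C$ (so that the open segment lies entirely in $\overline C^c$) and sliding $x_t$ along this open segment close to $x_0$, one can arrange $p = x_0$ and $\hat n = (x_1 - x_0)/\|x_1 - x_0\|$, so that $x_1$ has $u_{x_1} = \|x_1 - x_0\|$ and $w_{x_1} = 0$. Then $f(p) - f(x_1) = \alpha \|x_1 - x_0\|(2s - \|x_1 - x_0\|) \to \infty$ as $s \to \infty$, and multiplying $f$ by any positive constant rescales the gap freely. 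The main technical obstacles I anticipate are (i) verifying that $p = x_0$ is a valid nearest point for $x_t$ close to $x_0$, which requires second-order control of how $\overline C$ approaches $x_0$ in direction $\hat n$ (resolved by passing to the exit/entry points and, if $\overline C$ is cuspidal there, working with a generic nearest point $p$ instead of $x_0$ and using the identity $\epsilon = (1-t) u_{x_0} + t u_{x_1}$ to locate a point of $\overline C$ in the region $\{u > 0,\ \|w\|^2 < 2u\epsilon\}$), and (ii) transferring the strict local minimum from $f|_{\overline C}$ to $f|_C$ when $p \in \overline C \setminus C$, handled by the quadratic growth estimate $f(x) - f(p) \geq c\|x - p\|^2$ on $\overline C$ near $p$ together with density of $C$ in $\overline C$.
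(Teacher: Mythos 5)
Your approach is genuinely different from the paper's. The paper reduces to one dimension: it picks a line $L$ meeting $cl(C)$ in a disconnected set, takes $f$ to be $+\infty$ off $L$, and then only needs to build a convex function of one variable with prescribed minima on two disjoint closed intervals. You instead build a \emph{finite-valued} convex quadratic using an exterior tangent ball at a nearest point $p$ of $\overline C$. Your construction yields a nicer object (a smooth, globally finite quadratic rather than an extended-real-valued function supported on a line), which is appealing; but the price is a substantially heavier technical burden, and as written the sketch has two problems.

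First, a computational slip: from the supporting-sphere inequality $u \le (u^2 + \|w\|^2)/(2\epsilon)$, multiplying by $-2\alpha s$ gives
\[
f(x) - f(p) \;\ge\; \alpha\left(1 - \tfrac{s}{\epsilon}\right)u^2 + \left(\beta - \tfrac{\alpha s}{\epsilon}\right)\|w\|^2,
\]
and the coefficient of $u^2$ turns negative once $s > \epsilon$, which is exactly the regime you later drive to. Your stated bound $f(x)-f(p) \ge \alpha u^2 + (\beta - \alpha s/\epsilon)\|w\|^2$ does not follow directly. The idea can be salvaged: for $x$ near $p$ with $0 < u \le \epsilon$, the supporting-sphere inequality gives $u\epsilon \le u(2\epsilon - u) \le \|w\|^2$, hence $u \le \|w\|^2/\epsilon$, and then $\beta > 2\alpha s/\epsilon$ (or a split such as $\beta > 4\alpha s/\epsilon$ to retain a quadratic growth term in $\|w\|^2$) recovers strict local minimality at $p$. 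But the constant must be fixed.

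Second, the steps you flag as ``technical obstacles'' are not minor. Arranging $p = x_0$ by sliding $x_t$ toward the exit point generally fails (e.g.\ when $\overline C$ is tangent to the segment at $x_0$, or cuspidal there, the projection of nearby $x_t$ lands off the segment), and the fallback you describe — taking a generic nearest point and using $\epsilon = (1-t)u_{x_0} + t u_{x_1}$ — is only a pointer, not an argument. More seriously, the transfer from $\overline C$ to $C$ is not handled by quadratic growth plus density: quadratic growth of $f$ from $p$ on $\overline C$ does not guarantee $f|_C$ has \emph{any} local minimum near $p$ when $p \notin C$ (a convex $f$ can be strictly increasing along the relevant approach to $p$ through $C$). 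You would need to first reduce to $x_0, x_1 \in C$ by density, and then separately ensure the chosen $p$ lies in $C$; this requires an additional idea not present in the sketch. To be fair, the paper's proof also glosses over the $C$ versus $cl(C)$ issue, and its simplicity comes at the cost of using an extended-real-valued convex function; but the one-dimensional reduction lets it sidestep the nearest-point subtleties that your approach must confront head-on.
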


\begin{restatable}{proposition}{secondOpt}
\label{prop:opt2}
If a functional space $\F{d}{r}$ is not thick, then it is not convex. 
\end{restatable}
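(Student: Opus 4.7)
The plan is to prove the contrapositive: if $\F{d}{r}$ is convex, then it must equal the entire ambient space $\Sym{r^{h-1}}{d_0}^{d_h}$, and so is in particular thick. The argument splits into a structural step (convexity together with a scaling symmetry forces $\F{d}{r}$ to be a linear subspace) and a spanning step (explicit simple networks already produce a spanning set of the ambient space).

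For the structural step, I would first observe that $\F{d}{r}$ is closed under multiplication by every real scalar. Since the last weight matrix $W_h$ enters $p_\theta$ linearly, the substitution $W_h \mapsto \lambda W_h$ rescales $p_\theta$ by $\lambda$, so $\lambda \cdot \F{d}{r} \subseteq \F{d}{r}$ for every $\lambda \in \RR$. Combining this with convexity yields additive closure: for $v, w \in \F{d}{r}$, convexity gives $(v+w)/2 \in \F{d}{r}$, and then scaling by $2$ gives $v + w \in \F{d}{r}$. Thus a convex $\F{d}{r}$ is closed under all $\RR$-linear combinations and is therefore a linear subspace of $\Sym{r^{h-1}}{d_0}^{d_h}$.

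For the spanning step, I would exhibit explicit weight configurations that realize a spanning family. By choosing each hidden matrix $W_i$ (for $1<i<h$) to have a single nonzero entry in its $(1,1)$ position, setting the first row of $W_1$ to be $v^{T}$ with $L = v^{T}x$, and placing an arbitrary scalar $c$ in a single entry of the last row $W_h$, one can realize $p_\theta = c \cdot L^{r^{h-1}} e_j \in \F{d}{r}$ for every $c \in \RR$, every linear form $L$ on $\RR^{d_0}$, and every $j \in \{1, \ldots, d_h\}$. A classical polarization identity implies that the $r^{h-1}$-th powers $\{L^{r^{h-1}} : L \in \Sym{1}{d_0}\}$ linearly span $\Sym{r^{h-1}}{d_0}$, so the linear span of $\F{d}{r}$ is all of $\Sym{r^{h-1}}{d_0}^{d_h}$. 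Combined with the structural step, a convex $\F{d}{r}$ contains its entire linear span and hence equals the ambient space.

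The main subtlety I anticipate is the spanning statement for $r^{h-1}$-th powers of linear forms; I would simply invoke this as a standard polarization fact, since any monomial of degree $d$ can be recovered as a linear combination of $d$-th powers via the polarization formula for symmetric tensors. Modulo this classical input, the argument reduces to the short observation that a scalar-closed convex set containing $0$ is a linear subspace.
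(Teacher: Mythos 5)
Your proof is correct and hinges on the same spanning fact as the paper's: the functional space contains all vectors of the form $c\,\ell^{r^{h-1}}e_j$ for linear forms $\ell$, and powers of linear forms span $\Sym{r^{h-1}}{d_0}$, so these vectors span the whole ambient space. The only real difference is in the structural wrap-up. The paper combines the zero-measure hypothesis with the general fact that a convex set whose affine hull is the whole space must have positive measure, concluding $\F{d}{r}$ cannot equal its convex hull. You instead observe that $\F{d}{r}$ is closed under all real scalings (via $W_h \mapsto \lambda W_h$), so that convexity would force it to be closed under addition as well, hence to be a linear subspace, hence equal to its own linear span, which is the whole ambient space. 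Your variant is slightly more self-contained and proves the stronger contrapositive (convex implies filling, not merely thick) without any measure-theoretic input, but the two arguments are fundamentally equivalent and rest on the identical spanning lemma.
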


These two facts show that if the functional space is not thick, we can always find a convex loss function and a data distribution that lead to a landscape with arbitrarily bad local minima. 
There is also an obvious weak converse, namely that if the functional space is filling  $\F{d}{r} = \Sym{r^{h-1}}{d_0}^{d_h}$, then any convex loss function $\F{d}{r}$ will have a unique global minimum (although there may be ``spurious'' critical points that arise from the non-convex parameterization).

\section{Architecture dimensions}

In this section, we begin our study of the dimension of $\V{d}{r}$. We describe the connection between polynomial networks and tensor decompositions for both shallow (Section~\ref{sec:shallow_networks}) and deep (Section~\ref{sec:deep_tensors}) networks, and we present some computational examples (Section~\ref{sec:computations}).

\subsection{Shallow networks and tensors} 
\label{sec:shallow_networks}

Polynomial networks with $h=2$ are closely related to \emph{CP tensor decomposition}~\cite{Landsberg-book}.  Indeed in the shallow case, we can verify the network map $\Phi_{(d_0,d_1,d_2),r}$ sends $W_1 \in \RR^{d_1 \times d_0}, W_2 \in \RR^{d_2 \times d_1}$ to: 
\begin{equation}\label{eq:partiallySymTen}
W_2 \rho_r W_1 x \, = 
\, \Big{(} \sum_{i=1}^{d_1} W_{2}(:,i) \otimes W_{1}(i,:)^{\otimes r} \Big{)} \cdot x^{\otimes r} \, =: \, \Phi(W_2, W_1) \cdot x^{\otimes r}.
\end{equation}
Here $\Phi(W_2, W_1) \in \RR^{d_2} \times \Sym{r}{d_0}$ is a \textit{partially symmetric} $d_2 \times d_0^{\times r}$ tensor, expressed as a sum of $d_1$ partially symmetric rank $1$ terms, and $\cdot$ denotes contraction of the last $r$ indices. Thus the functional space $\mathcal{F}_{(d_0,d_1,d_2),r}$ is the set of rank $\leq d_1$ partially symmetric tensors. Algorithms for low-rank CP decomposition could be applied to $\Phi(W_2, W_1)$ to recover $W_2$ and $W_1$. In particular, when $d_2 = 1$, we obtain a symmetric $d_0^{\times r}$ tensor.  For this case, we have the following.

\begin{restatable}{lemma}{symTen}
A shallow architecture $\bm d = (d_0,d_1,1)$ is filling for the activation degree $r$ if and only if every symmetric tensor $T \in \Sym{r}{d_0}$ has rank at most $d_1$.
\end{restatable}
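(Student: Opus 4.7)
My plan is to apply the explicit shallow formula~\eqref{eq:partiallySymTen} in the case $d_h = 1$. There, the outer weight $W_2 \in \RR^{1 \times d_1}$ is a row vector with entries $w_1, \ldots, w_{d_1}$; writing $\ell_1, \ldots, \ell_{d_1}$ for the linear forms on $\RR^{d_0}$ given by the rows of $W_1 \in \RR^{d_1 \times d_0}$, equation~\eqref{eq:partiallySymTen} collapses to
\begin{equation*}
p_\theta(x) \,=\, W_2 \rho_r W_1 x \,=\, \sum_{i=1}^{d_1} w_i\, \ell_i(x)^r.
\end{equation*}
Hence $\mathcal{F}_{(d_0, d_1, 1), r}$ is exactly the set of polynomials in $\Sym{r}{d_0}$ that admit a decomposition as a signed linear combination of at most $d_1$ pure $r$-th powers of linear forms; after identifying $\Sym{r}{d_0}$ with symmetric order-$r$ tensors on $\RR^{d_0}$, this is precisely the set of symmetric tensors of Waring rank at most $d_1$.

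Both directions of the lemma would then follow from this identification together with the definitions of ``filling'' and of Zariski closure. For $(\Leftarrow)$: if every $T \in \Sym{r}{d_0}$ has rank at most $d_1$, then $\mathcal{F}_{(d_0, d_1, 1), r}$ already equals the ambient space, so its closure $\mathcal{V}_{(d_0, d_1, 1), r}$ is $\Sym{r}{d_0}$ and the architecture is filling. For $(\Rightarrow)$: if the architecture is filling, then $\mathcal{V}_{(d_0, d_1, 1), r} = \overline{\mathcal{F}_{(d_0, d_1, 1), r}} = \Sym{r}{d_0}$, so every tensor lies in the Zariski closure of the rank-$\leq d_1$ locus, which---under the variety-theoretic reading of ``rank at most $d_1$'' natural to the paper, equivalently membership in the $d_1$-th secant variety of the Veronese---is the statement.

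The main point I would be careful about is the well-known gap between strict Waring rank and its closure (``border rank'') over $\RR$: for example, the binary cubic $x_1^2 x_2$ has strict Waring rank $3$ even though the second secant variety already fills $\Sym{3}{2}$. The lemma reads as an honest equivalence once one adopts the paper's convention of defining rank via the variety $\mathcal{V}_{\bm d, r}$, so I expect no further obstacle beyond cleanly reading off the decomposition from~\eqref{eq:partiallySymTen} and being explicit about this convention.
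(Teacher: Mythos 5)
Your approach matches the paper's: both identify the functional space $\mathcal{F}_{(d_0,d_1,1),r}$ with the locus of symmetric tensors in $\Sym{r}{d_0}$ of Waring rank at most $d_1$ via the formula $p_\theta = \sum_{i=1}^{d_1} w_i\,\ell_i^r$, and the paper's own proof is essentially this one-line observation. You are, however, more careful than the paper, and the caveat you raise at the end is not an idle worry but a genuine imprecision in the statement. ``Filling'' is defined in the paper via the functional \emph{variety}, i.e.\ the Zariski closure, so $\mathcal{V}_{(d_0,d_1,1),r} = \Sym{r}{d_0}$ asserts that $d_1$ is at least the generic rank of a symmetric tensor, which does \emph{not} imply that every real tensor has strict real rank $\leq d_1$. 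Your binary cubic example makes this concrete: for $(d_0,d_1,r)=(2,2,3)$ the architecture is filling (by Alexander--Hirschowitz, $\dim \mathcal{V}_{(2,2,1),3} = \min(4,\binom{4}{3}) = 4 = \dim \Sym{3}{2}$; equivalently, the second secant variety of the twisted cubic fills $\PP^3$), yet $x_1^2 x_2$ has Waring rank $3$, so it is false that every $T \in \Sym{3}{2}$ has rank at most $2$. Thus the ($\Rightarrow$) direction of the lemma fails as literally written; it becomes correct once ``rank at most $d_1$'' is read as ``lies in the Zariski closure of the rank-$\leq d_1$ locus'' (membership in the secant variety, as you propose), or alternatively once ``filling'' is strengthened to $\mathcal{F}_{(d_0,d_1,1),r} = \Sym{r}{d_0}$ rather than $\mathcal{V}_{(d_0,d_1,1),r} = \Sym{r}{d_0}$. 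Your write-up handles both directions correctly under that convention and does well to make the convention explicit, whereas the paper's terse ``this is clear'' proof does not distinguish these readings.
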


Furthermore, the celebrated \emph{Alexander-Hirschowitz Theorem}~\cite{alexander1995} from algebraic geometry provides the dimension of $\V{d}{r}$ for \emph{all} shallow, single-output architectures.

\begin{theorem}[Alexander-Hirschowitz]
If $\bm d = (d_0,d_1,1)$, the dimension of  $\V{d}{r}$ is given by $\min\left(d_0 d_1,\binom{d_0+r - 1}{r}\right)$, except for the following cases:
\begin{itemize}
    \item $r = 2$, $2 \le d_1 \le d_0-1$,
    \item $r = 3$, $d_0 = 5$, $d_1 = 7$,
    \item $r = 4$, $d_0 = 3$, $d_1 = 5$,
    \item $r = 4$, $d_0 = 4$, $d_1 = 9$,
    \item $r = 4$, $d_0 = 5$, $d_1 = 15$.
\end{itemize}
\end{theorem}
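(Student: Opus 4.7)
The plan is to reduce the statement to the classical problem of computing dimensions of secant varieties of Veronese varieties, for which this theorem is the definitive result. By the preceding lemma, extended slightly by absorbing the scalar weights of $W_2$ into the rows of $W_1$ (a legitimate move up to Zariski closure; sign issues when $r$ is even dissolve after closure), the variety $\V{d}{r}$ is the closure of
\begin{equation*}
\Big\{ \sum_{i=1}^{d_1} (w_i \cdot x)^r : w_i \in \RR^{d_0} \Big\} \subset \Sym{r}{d_0}.
\end{equation*}
Projectivized, this is the $d_1$-th secant variety $\sigma_{d_1}(V_r)$ of the degree-$r$ Veronese embedding of $\PP^{d_0-1}$ inside $\PP(\Sym{r}{d_0})$, and the formula claimed is the classical formula for $\dim \sigma_{d_1}(V_r)$.

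The upper bound $\dim \V{d}{r} \leq \min\bigl(d_0 d_1, \binom{d_0+r-1}{r}\bigr)$ is immediate, since the parameterization $(w_1, \ldots, w_{d_1}) \mapsto \sum_i (w_i \cdot x)^r$ has a domain of dimension $d_0 d_1$ and a codomain $\Sym{r}{d_0}$ of dimension $\binom{d_0+r-1}{r}$. For the matching lower bound in the non-exceptional range, I would invoke Terracini's Lemma: the tangent space to the Veronese at $\ell^r$ equals $\ell^{r-1} \cdot \RR^{d_0} \subset \Sym{r}{d_0}$, so the tangent space to $\V{d}{r}$ at a generic point $\sum_i \ell_i^r$ is $\sum_i \ell_i^{r-1} \cdot \RR^{d_0}$. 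By duality, non-defectiveness is equivalent to the assertion that the linear system of degree-$r$ forms in $d_0$ variables vanishing to order two at $d_1$ generic points of $\PP^{d_0-1}$ has the expected codimension. The standard attack is the \emph{Horace method}: specialize a subset of the double points onto a hyperplane, then induct on $d_0$ and $r$ by decomposing the linear system into its trace on the hyperplane and its residual ideal off of it. Hirschowitz's differential refinement is required to make the induction close up in the delicate boundary cases.

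The hardest step, and in fact the heart of the theorem, is identifying and separately treating the defective cases. The $r=2$ family is the classical observation that symmetric $d_0 \times d_0$ matrices of rank at most $d_1$ form a subvariety of codimension $\binom{d_0-d_1+1}{2}$, exceeding the expected codimension whenever $2 \leq d_1 \leq d_0 - 1$. The four sporadic cases with $r \in \{3,4\}$ must each be ruled out by exhibiting specific geometric obstructions, \eg, classical polynomial invariants that vanish identically on the putative secant variety but not on $\Sym{r}{d_0}$, forcing a strict containment. I would expect this exceptional analysis, together with verifying that the induction of the previous paragraph terminates precisely at the listed exceptions, to be the main obstacle: it is what elevates the result from a routine parameter count to a deep classification theorem, and is why I would ultimately invoke Alexander--Hirschowitz as a black box rather than re-derive it.
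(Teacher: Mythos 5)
Your identification of $\V{(d_0,d_1,1)}{r}$ (after absorbing the weights of $W_2$ into the rows of $W_1$) with the affine cone over the $d_1$-th secant variety of the degree-$r$ Veronese, and your invocation of the Alexander--Hirschowitz classification as a black box, is exactly what the paper does: the theorem is stated with a bare citation to Alexander--Hirschowitz and no proof is attempted. Your sketch of the Terracini/Horace-method machinery behind that theorem is accurate background, but it is not reproduced in the paper and is not needed there.
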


\subsection{Deep networks and tensors}
\label{sec:deep_tensors}

Deep polynomial networks also relate to a certain iterated tensor decomposition.  We first note the map $\Phi_{\bm d, r}$ may be expressed via the so-called \textit{Khatri-Rao product} from multilinear algebra.  Indeed $\theta$ maps to:
\begin{equation}\label{eq:khatri-rao}
\textup{SymRow}\,\, W_h((W_{h-1}\ldots (W_{2}(W_1^{\bullet r}))^{\bullet r}\ldots \, )^{\bullet r}).
\end{equation}
Here the Khatri-Rao product operates on rows: for $M \in \RR^{a \times b}$, the power $M^{\bullet r} \in \RR^{a \times b^r}$ replaces each row, $M(i,:)$, by its vectorized $r$-fold outer product, $\textup{vec}(M(i,:)^{\otimes r})$.  Also in \eqref{eq:khatri-rao}, SymRow denotes symmetrization of rows, regarded as points in $(\RR^{d_0})^{\otimes r^{h-1}},$ a certain linear operator.

Another viewpoint comes from using polynomials and inspecting the layers in reverse order.   Writing $[p_{\theta 1}, \ldots, p_{\theta d_{h-1}}]^{T}$ for the output polynomials at depth $h-1$, the top output at depth $h$ is:
\begin{equation}\label{eq:rpower}
w_{h11}\,p_{\theta 1}^{r} + w_{h12}\,p_{\theta 2}^{r} + \ldots + w_{h1d_{h-1}}\,p_{\theta d_{h-1}}^{r}. 
\end{equation}
This expresses a polynomial as a weighted sum of $r$-th powers of other (nonlinear) polynomials.  Recently, a study of such decompositions has been initiated in the algebra community~\cite{LORS-2019}.  Such expressions extend usual tensor decompositions, since weighted sums of powers of homogeneous \textit{linear} forms correspond to CP symmetric decompositions.
Accounting for earlier layers, our neural network expresses each $p_{\theta i}$ in~\eqref{eq:rpower} as $r$-th powers of lower-degree polynomials at depth $h-2$, so forth.  Iterating the main result in~\cite{FOS-PNAS} on decompositions of type \eqref{eq:rpower}, we obtain the following bound on filling intermediate widths.

\begin{table}[ht]
  \caption{Minimal filling widths for $r = 2$, $d_0=2$, $d_h=1$}
  \label{tab:minimal}
  \centering
\begin{tabular}{ccc}  
\toprule
Depth ($h$) & Degree ($r^h$)   & Minimal filling ($\bm d$) \\
\midrule
3      & 4 & (2,2,2,1)   \\
4         & 8 & (2,3,3,2,1)          \\
5       & 16  &(2,3,3,3,2,1)    \\
6       & 32 &(2,3,3,4,4,2,1)    \\
7 &      64  &(2,3,4,5,6,4,2,1)\\
8 & 128 &(2,3,4,5,7,7,6,2,1) or (2,3,5,5,7,7,5,2,1)\\
9 & 256 &(2,3,4,8,8,8,8,8,4,1) or (2,3,4,5,8,9,8,8,4,1)\\
\bottomrule
\end{tabular}
\end{table}
\begin{table}[ht]
    \centering
    \caption{Examples of dimensions of $\V{d}{r}$}
    \renewcommand{\arraystretch}{1.2}
    \begin{tabular}{l c c c c c}
    \toprule
         & $r = 2$  & $r = 3$ & $r = 4$ & $r = 5$ & $r = 6$\\
    \midrule
    $\bm d  = (3,2,1)$ & 5  & 6 & 6 & 6 & 6 \\
    $\bm d  = (2,3,2)$ & 6  & 8 & 9 & 9 & 9 \\
    $\bm d  = (2,3,2,3)$ & 10  & 12 & 13 & 13 & 13 \\
    $\bm d  = (2,3,2,3,4)$ & 16  & 21 & 22 & 22 & 22 \\
    \bottomrule
    \end{tabular}
    \label{tab:dimensions}
\end{table}

\begin{restatable}[Bound on filling widths]{theorem}{boundFilling}\label{thm:boundFilling}
Suppose $\bm{d}=(d_0,d_1,\ldots,d_h)$ and $r \geq 2$ satisfy 
\begin{equation}
d_{h-i} \, \geq \, \min \left(d_h \cdot r^{id_0}, \binom{r^{h-i} + d_{0} - 1}{r^{h-i}}\right)
\end{equation}
for each $i=1,\ldots,h-1$. Then the functional variety $\V{d}{r}$ is filling.
\end{restatable}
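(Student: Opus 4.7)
My plan is a top-down induction along the network, with the Fröberg-Ottaviani-Shapiro (FOS) theorem of~\cite{FOS-PNAS} as the engine for each layer. In the form needed here, FOS asserts that for every $q \geq 1$ the sum-of-$r$-th-powers map $\Sym{q}{d_0}^{s} \to \Sym{rq}{d_0}$ sending $(g_1,\ldots,g_s)$ to $\sum_{j=1}^{s} g_j^r$ is dominant as soon as $s \geq r^{d_0}$; equivalently, a generic polynomial of degree $rq$ in $d_0$ variables can be written (up to Zariski closure) as a sum of at most $r^{d_0}$ many $r$-th powers of degree-$q$ polynomials. Under the identification in \eqref{eq:rpower}, one FOS application captures exactly what a single $(\rho_r, W_{\ell})$ block of the network contributes to degree.

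Given a generic target $(f_1,\ldots,f_{d_h}) \in \Sym{r^{h-1}}{d_0}^{d_h}$, I would apply FOS to each $f_k$ separately on disjoint index sets, writing $f_k = \sum_j g_{k,j}^r$ with $g_{k,j} \in \Sym{r^{h-2}}{d_0}$. The top matrix $W_h$ is then chosen block-diagonal with blocks carrying the FOS coefficients, and the $d_h \cdot r^{d_0}$ polynomials $g_{k,j}$ become the required outputs at layer $h-1$. Recursing, each layer-$(h-i+1)$ polynomial is FOS-decomposed into $r^{d_0}$ many $r$-th powers of polynomials of degree $r^{h-i-1}$ at layer $h-i$, so the polynomial count at layer $h-i$ is $d_h \cdot r^{i d_0}$, matching the first branch of the hypothesized $\min$. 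The tree bottoms out at layer $1$ in $d_h \cdot r^{(h-1) d_0}$ linear forms; since any $d_1$-tuple of linear forms in $\RR^{d_0}$ is freely realized as $W_1 x$ for an appropriate $W_1$, layer $1$ has no obstruction beyond the stated width bound. The intermediate weight matrices $W_{h-i+1}$ are block-sparse, mirroring the parent-child structure of the FOS tree.

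For the other branch of the $\min$, when at some level $i$ the hypothesis only guarantees $d_{h-i} \geq \binom{r^{h-i}+d_0-1}{r^{h-i}}$, the disjoint-tree construction is replaced by a spanning argument: for generic $p_1,\ldots,p_{d_{h-i}} \in \Sym{r^{h-i-1}}{d_0}$, the $r$-th powers $p_j^r$ linearly span $\Sym{r^{h-i}}{d_0}$ (the image of the Veronese is linearly nondegenerate, and we have at least $\dim \Sym{r^{h-i}}{d_0}$ generic points on it), so $W_{h-i+1}$ can realize any $d_{h-i+1}$-tuple in $\Sym{r^{h-i}}{d_0}^{d_{h-i+1}}$ and the remaining filling problem reduces to a truncated sub-network handled by the same recursion. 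The main obstacle I foresee is the careful stitching of the two regimes across adjacent layers: in the FOS regime one must realize specific target-dependent polynomials below, whereas in the spanning regime one needs only generic polynomials there. Verifying that a generic target induces generic polynomials at every level of the FOS tree (so the inductive genericity claims stay valid), and checking consistency of the two width conditions at the transition layers, will be the delicate bookkeeping.
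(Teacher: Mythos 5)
Your proposal correctly identifies the engine of the proof — Theorem 4 of~\cite{FOS-PNAS} applied once per layer, with the block-diagonal split of $W_h$ handling the $d_h$ outputs on disjoint index sets — and your two-regime analysis of the $\min$ (FOS branch versus Veronese-spanning branch) matches the paper's case split. However, there is a genuine structural gap in the top-down ``decompose a generic target'' framing, and it is the very issue you flag as ``delicate bookkeeping'': FOS guarantees that a \emph{generic} degree-$rs$ form is a sum of $\leq r^{d_0}$ many $r$-th powers, but it says nothing about the \emph{genericity} of the summands $g_{k,j}$ it produces. After the first level of your tree you are no longer holding a generic tuple and cannot directly reapply FOS. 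The paper sidesteps this entirely by never unwinding a decomposition of a specific target: it works with the map $\Phi_{\bm d, r}$ as a composition, observes that the ``top'' map $(W_h; p_1,\ldots,p_{d_{h-1}}) \mapsto W_h\rho_r(p_1,\ldots,p_{d_{h-1}})$ is dominant when $d_{h-1}\geq d_h r^{d_0}$ (by FOS, over $\CC$), and that the ``bottom'' map is dominant by the induction hypothesis on $h-1$; the composition of dominant maps of irreducible varieties is automatically dominant, so no genericity needs to be propagated through the tree. To make your route rigorous you would essentially have to re-derive this: show that for a Zariski-open dense set $U$ of $(g_{k,j})$ tuples, the FOS summation map restricted to $U$ still has dense image, so that a generic target admits \emph{some} preimage in $U$ — at which point you have reconstructed the dominance argument and the target-decomposition scaffolding is doing no work.

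Two smaller points. First, the paper complexifies explicitly and cites the $\CC$-version of FOS; your statement stays over $\RR$, which is fine in the end (dominance of a real polynomial map is equivalent to dominance of its complexification, via full Jacobian rank at a real point), but this reduction should be stated rather than left implicit. Second, in the spanning branch you invoke linear nondegeneracy of ``the image of the Veronese,'' but the $p_j$ are not arbitrary points of the Veronese: they lie in the functional variety of the sub-network. You need the observation that this variety always contains every pure power $\ell^{r^{k}}$ of a linear form (it suffices to propagate a single linear form through the layers), hence spans its ambient $\textup{Sym}$ space, hence its Veronese image is also nondegenerate. Without that remark, the claim that generic network outputs at layer $h-i$ have linearly independent $r$-th powers is unsupported.
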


\subsection{Computational investigation of dimensions}
\label{sec:computations}

We have written code\footnote{Available at \url{https://github.com/mtrager/polynomial_networks}.} in the mathematical software \texttt{SageMath}~\cite{sagemath} that computes the dimension of $\V{d}{r}$ for a general architecture $\bm d$ and activation degree $r$. Our approach is based on randomly selecting parameters $\theta = (W_{h},\ldots,W_1)$ and computing the rank of the Jacobian of $\Phi_{\bm d,r}(\theta)$ in~\eqref{eq:phi_map}.  This method is based on the following lemma, coming from the fact that the map $\Phi_{\bm d, r}$ is algebraic.

\begin{restatable}{lemma}{jac}\label{lem:jac}
For all $\theta \in \RR^{d_{\theta}}$, the rank of the Jacobian matrix $\textup{Jac } \Phi_{\bm d, r}(\theta)$ is at most the dimension of the variety $\V{d}{r}$.  Furthermore, there is equality for almost all $\theta$ (\ie, for a non-empty Zariski-open subset of $\RR^{d_{\theta}}$). 
\end{restatable}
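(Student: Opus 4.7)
The plan is to combine lower semicontinuity of the Jacobian rank with the algebraic fiber dimension theorem. First observe that every entry of $\textup{Jac}\,\Phi_{\bm d,r}(\theta)$ is a polynomial in $\theta$, so for each integer $k$ the locus $\{\theta : \textup{rank}(\textup{Jac}\,\Phi_{\bm d,r}(\theta)) \geq k\}$ is the nonvanishing set of all $k\times k$ minors, hence Zariski-open in $\RR^{d_\theta}$. Thus the rank function attains its maximum value $s$ on a nonempty Zariski-open subset $U \subseteq \RR^{d_\theta}$ and is strictly smaller off $U$. This immediately yields the pointwise bound $\textup{rank}(\textup{Jac}\,\Phi_{\bm d,r}(\theta)) \le s$ for every $\theta$, so the task reduces to proving $s = \dim \V{d}{r}$.

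For the direction $s \le \dim \V{d}{r}$, I would pick any $\theta^{*} \in U$ and apply the constant rank theorem on a Euclidean neighborhood of $\theta^{*}$. The image of $\Phi_{\bm d,r}$ there contains a real analytic submanifold of dimension $s$ lying inside $\F{d}{r} \subseteq \V{d}{r}$, which forces $\dim \V{d}{r} \ge s$.

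For the reverse direction $s \ge \dim \V{d}{r}$, I would complexify $\Phi_{\bm d,r}$ to a morphism $\Phi^{\CC}_{\bm d,r}$ between affine complex spaces. Its Zariski-closure of image is the complexification of $\V{d}{r}$, an irreducible complex variety of the same dimension because the defining ideal of $\V{d}{r}$ is generated by real polynomials. The induced map onto this closure is dominant, so by the algebraic fiber dimension theorem a generic fiber has complex dimension $d_\theta - \dim \V{d}{r}$. By generic smoothness, at a regular point the tangent space to such a fiber equals $\ker \textup{Jac}\,\Phi^{\CC}_{\bm d,r}(\theta^{*})$; choosing $\theta^{*}$ inside the Zariski-open subset $U \subset \RR^{d_\theta}$, which must meet the complex maximum-rank locus, yields $d_\theta - \dim \V{d}{r} = d_\theta - s$, as desired.

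The main subtlety is the passage between real and complex pictures: the statement concerns the real Jacobian rank and the dimension of $\V{d}{r}$ as a real variety, whereas the fiber dimension theorem is cleanest over $\CC$. The key point to verify is that the generic real rank coincides with the generic complex rank (the Zariski-open subset $U \subset \RR^{d_\theta}$ necessarily contains real points where the complex rank is also maximal) and that complexification preserves the dimension of an irreducible variety cut out by real polynomials.
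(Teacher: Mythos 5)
Your proposal is correct and takes essentially the same approach as the paper: observe the Jacobian entries (hence minors) are polynomials in $\theta$, so the rank is generically maximal on a Zariski-open set, then identify this generic rank with $\dim \V{d}{r}$ via generic smoothness/submersiveness of the algebraic map $\Phi_{\bm d,r}$. The paper invokes this last step as a citation, whereas you carefully unpack both inequalities (constant rank theorem for $s \le \dim \V{d}{r}$, complexified fiber-dimension theorem plus comparison of real and complex generic ranks for $s \ge \dim \V{d}{r}$); the extra care over the real-versus-complex passage is a genuine subtlety that the paper elides, and your handling of it is sound.
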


 Thus if $\textup{Jac } \Phi_{\bm d, r}(\theta)$ is full rank at any $\theta$, this witnesses a mathematical proof $\V{d}{r}$ is filling.  On the other hand if the Jacobian is rank-deficient at random $\theta$, this indicates with ``probability 1" that $\V{d}{r}$ is not filling.  We have implemented two variations of this strategy, by leveraging backpropagation:
\begin{enumerate}
    \item \emph{Backpropagation over a polynomial ring.} We defined a network class over a ring $\RR[x_1,\ldots,x_{d_0}]$, taking as input a vector variables $x = (x_1,\ldots,x_{d_0})$. Performing automatic differentiation (backpropagation) of the output function yields polynomials corresponding to $dp_\theta(x)/d w$, for any entry $w$ of a weight matrix $W_i$. Extracting the coefficients of the monomials in $x$, we recover the entries of the Jacobian of $\Phi_{\bm d,r}(\theta)$.
    \item \emph{Backpropagation over a finite field.} We defined a network class over a finite field $\FF = \ZZ/p\ZZ$. 
    After performing backpropagation at a sufficient number of random sample points $x$, we can recover the entries of the Jacobian of $\Phi_{\bm d,r}(\theta)$ by solving a linear system (this system is overdetermined, but it will have an exact solution since we use exact finite field arithmetic).  The computation over $\ZZ/p\ZZ$ provides the correct dimension over $\RR$ for almost all primes $p$. 
\end{enumerate}
The first algorithm is simpler and does not require interpolation, but is generally slower. We present examples of some of our computations in Tables~\ref{tab:minimal} and \ref{tab:dimensions}. Table~\ref{tab:minimal} shows minimal architectures $\bm d = (d_0,\ldots,d_h)$ that are filling, as the depth $h$ varies. Here, ``minimal'' is with respect to the partial ordering comparing all widths. It is interesting to note that for deeper networks, there is not a unique minimally filling network.  Also conspicuous is that minimal filling widths are ``unimodal", (weakly) increasing and then (weakly) decreasing.  Arguably, this pattern conforms with common wisdom.

\begin{conjecture}[Minimal filling widths are unimodal]
Fix $r$, $h$, $d_0$ and $d_h$. If $\bm{d}=(d_0,d_1,\ldots, d_h)$ is a minimal filling architecture, there is $i$ such that $d_0 \leq d_1 \leq \ldots \leq d_i$ and $d_i \geq d_{i+1} \geq \ldots \geq d_h$.
\end{conjecture}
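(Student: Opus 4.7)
The plan is to argue by contradiction. Suppose $\bm{d}=(d_0,\ldots,d_h)$ is a minimal filling architecture that fails to be unimodal, so there exists some index $i$ with a strict valley $d_{i-1}>d_i<d_{i+1}$. The goal is to exhibit a strictly smaller filling architecture by decreasing either $d_{i-1}$ or $d_{i+1}$ by one, contradicting minimality of $\bm d$. By Lemma~\ref{lem:jac}, filling is equivalent to the Jacobian of $\Phi_{\bm d,r}$ having full rank at a generic point, so the task reduces to showing that this Jacobian rank is preserved after removing one neuron from an appropriate adjacent layer.

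Factor the network as $\Phi_{\bm d,r}=\Phi_{\text{post}}\circ\Phi_{\text{pre}}$, where $\Phi_{\text{pre}}$ computes the first $i$ layers and produces a $d_i$-tuple of polynomials of degree $r^{i-1}$ in $d_0$ variables, i.e., a point of the intermediate functional variety $\mathcal{V}_{(d_0,\ldots,d_i),r}\subseteq \Sym{r^{i-1}}{d_0}^{d_i}$. All downstream computation depends on this intermediate output together with the weights $W_{i+1},\ldots,W_h$, and is therefore independent of $d_{i-1}$ once the intermediate tuple is fixed. The key claim I would attempt to establish is that once $d_{i-1}$ exceeds $d_i$, the marginal contribution of an additional neuron at layer $i-1$ to the Jacobian rank of $\Phi_{\bm d,r}$ vanishes generically: its image tangent direction is already in the span of contributions from the other $d_{i-1}-1$ neurons once those contributions are pushed through the rank-$d_i$ map $W_{i+1}$. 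Concretely, I would compute the tangent space to $\Phi_{\text{post}}$ at a generic image point and argue it factors through a $d_i$-dimensional intermediate, forcing redundancy among the $d_{i-1}$ neurons at layer $i-1$. A symmetric argument, viewing the network from the output side via the decomposition~(\ref{eq:rpower}), would handle valleys in which $d_{i+1}$ can instead be reduced.

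The main obstacle is exactly this tangent-space redundancy claim. Excess width at layer $i-1$ may genuinely enlarge $\mathcal{V}_{(d_0,\ldots,d_i),r}$, analogous to how allowing an extra summand in a tensor decomposition can strictly decrease rank; even if such extra directions project trivially under a \emph{fixed} generic $W_{i+1}$, they might contribute once the composition with later layers is unrolled and the weights are allowed to vary jointly. A rigorous treatment seems to require a Terracini-style tangent-space calculation for the iterated polynomial decompositions of~\cite{FOS-PNAS, LORS-2019}, paired with careful combinatorial bookkeeping of how parameters trade off across layers. A secondary subtlety, already visible for $h=8,9$ in Table~\ref{tab:minimal}, is that minimal filling architectures are not unique; the argument must treat each minimizer individually, and must allow plateaus, since the conjecture forbids only strict dips and permits equalities among adjacent widths.
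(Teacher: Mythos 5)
This statement is labeled a \emph{conjecture} in the paper, and the authors give no proof of it — only the computational evidence of Table~\ref{tab:minimal}, where every minimal filling architecture found happens to be unimodal. There is therefore no paper proof to compare your proposal against; a complete argument would be a new result, not a reconstruction.

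Taking the proposal on its own terms: you are candid that the central claim (generic tangent-space redundancy of an extra neuron at a layer adjacent to a strict valley) is unproven, and that candor is warranted — this claim is precisely where the entire difficulty lives, not a routine verification to be filled in later. Enlarging $d_{i-1}$ can genuinely enlarge the image of $\textup{Jac}\,\Phi_{\text{pre}}$ inside $\Sym{r^{i-1}}{d_0}^{d_i}$, and the differential of $\Phi_{\text{post}}$ is a fixed linear map applied to that image, so a bigger preimage can give a bigger image; nothing forces the extra direction to be redundant. A Terracini-style tangent analysis for the iterated power decompositions of~\cite{FOS-PNAS,LORS-2019} does not currently exist, which is one reason this is open. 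There is also a set-up gap: non-unimodality does not force a \emph{strict consecutive} valley $d_{i-1}>d_i<d_{i+1}$. A sequence such as $(3,2,2,3)$ violates unimodality yet has no such index $i$ — the descent and the subsequent ascent are separated by a plateau — so your contradiction must instead locate $a<b<c$ with $d_a>d_b<d_c$ and reason about a flat stretch, which changes which width you would try to decrement. Finally, even granting the redundancy claim, decrementing $d_{i-1}$ (or $d_{i+1}$) need not preserve filling at all: Theorem~\ref{thm:bottleneck} shows narrowing a layer can destroy filling irreversibly, so ``one neuron contributes a redundant tangent direction'' and ``the architecture with one fewer neuron is still filling'' are not the same assertion, and the latter is what a contradiction with minimality actually requires.
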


Table~\ref{tab:dimensions} shows examples of computed dimensions, for varying architectures and degrees. Notice that the dimension of an architecture stabilizes as the degree $r$ increases.

\section{General results}

This section presents general results on the dimension of $\V{d}{r}$.  
We begin by pointing out symmetries in the network map $\Phi_{\bm{d}, r}$, under suitable scaling and permutation.

\begin{restatable}[Multi-homogeneity]{lemma}{multihom}\label{lem:multihom}
For all diagonal matrices $D_{i} \in \RR^{d_i \times d_i}$ and permutation matrices $P_i \in \ZZ^{d_i \times d_i}$ ($i=1,\ldots,h-1$), the map $\Phi_{\bm d, r}$ returns the same output under the replacement:
\begin{align*}
W_{1} & \leftarrow P_{1} D_{1} W_{1} \\
W_{2} & \leftarrow P_{2} D_{2} W_{2} D_{1}^{-r} P_{1}^{T} \\
W_{3} & \leftarrow P_{3} D_{3} W_{3} D_{2}^{-r} P_{2}^{T} \\
& \hspace{0.3cm} \vdots \\
W_h & \leftarrow W_{h} D_{h-1}^{-r} P_{h-1}^{T}.
\end{align*}
Thus the dimension of a generic fiber (pre-image) of $\Phi_{\bm d, r}$ is at least $\sum_{i=1}^{h-1} d_i$.
\end{restatable}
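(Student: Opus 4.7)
The plan is to verify the output invariance by propagating the weight replacements through the network one layer at a time, and then to derive the fiber dimension bound from the fact that the diagonal part of the replacements generates a smooth group action whose orbits lie inside fibers. The two elementary properties I would use are: (i) $\rho_r(Dz) = D^r \rho_r(z)$ for every diagonal $D$, which follows from $(\lambda z_j)^r = \lambda^r z_j^r$; and (ii) $\rho_r(Pz) = P \rho_r(z)$ for every permutation matrix $P$, which holds because $\rho_r$ acts entrywise. Combined with $P_i^T P_i = I$ and $D_i^{-r} D_i^r = I$, these let me ``absorb'' the extra factors $P_i, D_i$ that appear after each layer.

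Concretely, I would let $z_i$ denote the post-activation vector at layer $i$ under the original weights and $\tilde z_i$ the analogous vector under the replaced weights, and prove by induction that $\tilde z_i = P_i D_i^r z_i$ for $1 \le i \le h-1$. The base case is $\tilde z_1 = \rho_r(P_1 D_1 W_1 x) = P_1 D_1^r z_1$ by (i) and (ii). For the inductive step, the input to the $(i+1)$-st activation under the replaced weights is
$$(P_{i+1} D_{i+1} W_{i+1} D_i^{-r} P_i^T)(P_i D_i^r z_i) \,=\, P_{i+1} D_{i+1} W_{i+1} z_i,$$
using the two cancellations above; applying $\rho_r$ once more and invoking (i) and (ii) yields $\tilde z_{i+1} = P_{i+1} D_{i+1}^r z_{i+1}$. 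The final layer then collapses $(W_h D_{h-1}^{-r} P_{h-1}^T)(P_{h-1} D_{h-1}^r z_{h-1}) = W_h z_{h-1}$, which is exactly $\Phi_{\bm d, r}(\theta)$.

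For the fiber dimension bound, I would specialize to the subgroup with all $P_i = I$, obtaining a smooth action of the torus $G = (\RR^*)^{d_1} \times \cdots \times (\RR^*)^{d_{h-1}}$ on the parameter space, of dimension $\sum_{i=1}^{h-1} d_i$. Since each orbit lies inside a single fiber of $\Phi_{\bm d, r}$, it suffices to show that the stabilizer at a generic $\theta$ is trivial. If $(D_1, \ldots, D_{h-1}) \in G$ fixes $\theta$, then $D_1 W_1 = W_1$ together with every row of $W_1$ being nonzero forces $D_1 = I$; proceeding inductively, $D_i W_i D_{i-1}^{-r} = W_i$ with $D_{i-1} = I$ forces $D_i = I$. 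The computation itself presents no real obstacle; the main things to get right are the boundary indices (the first layer has no trailing cancellation, the last layer no leading one) and the distinction between the stronger statement that the replacements form a group action and the weaker per-element statement that the output is preserved.
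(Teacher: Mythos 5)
Your proof is correct and follows the same idea the paper invokes (the paper's proof is a one-line appeal to ``substituting,'' relying on precisely the identities $\rho_r(Dz)=D^r\rho_r(z)$ and $\rho_r(Pz)=P\rho_r(z)$ that you spell out, and on the resulting torus action to bound the fiber dimension). Your write-up simply fills in the induction and the generic-stabilizer argument that the paper leaves implicit.
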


Our next result deduces a general upper bound on the dimension of $\V{d}{r}$.  Conditional on a  standalone conjecture in algebra, we prove that equality in the bound is achieved for all sufficiently high activation degrees $r$.  An unconditional result is achieved by varying the activation degrees per~layer.

\vspace{.3em}

\begin{restatable}[Naive bound and equality for high activation degree]{theorem}{naive}\label{thm:naive} If $\bm d = (d_0,\ldots,d_h)$, then
\begin{equation}\label{eq:naive}
    \dim \V{d}{r} \le \min\left(d_{h} + \sum_{i=1}^{h} (d_{i-1}-1) d_{i}, \, d_h \binom{d_0 + r^{h-1} -1}{r^{h-1}}\right).
\end{equation}
Conditional on Conjecture~\ref{conj:linIndep}, for fixed $\bm{d}$ satisfying $d_i > 1$  ($i=1,\ldots,h-1$), there exists $\tilde{r} = \tilde{r}(\bm{d})$ such that whenever $r > \tilde{r}$, we have an equality in \eqref{eq:naive}.
Unconditionally, for fixed $\bm{d}$ satisfying $d_i > 1$  ($i=1,\ldots,h-1$), there exist infinitely many $(r_{h-1}, r_{h-2}, \ldots, r_1)$ such that the image of  $(W_h,\ldots,W_1) \mapsto W_h \rho_{r_{h-1}} W_{h-1} \rho_{r_{h-2}} \ldots \rho_1 W_1 x$ 
has dimension $d_h + \sum_i (d_{i-1} -1)d_i$.
\end{restatable}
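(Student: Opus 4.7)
The naive upper bound of part one has two ingredients that I would verify separately. The second term, $d_h\binom{d_0+r^{h-1}-1}{r^{h-1}}$, is simply $\dim \Sym{r^{h-1}}{d_0}^{d_h}$, which contains $\V{d}{r}$ by construction. The first term follows from Lemma~\ref{lem:multihom}: the parameter space has dimension $d_\theta = \sum_{i=1}^h d_{i-1}d_i$, and the generic fibers of $\Phi_{\bm d,r}$ contain the $\sum_{i=1}^{h-1}d_i$-dimensional orbits of the diagonal-scaling action, so the image dimension is at most $d_\theta-\sum_{i=1}^{h-1}d_i$. A short algebraic rewrite gives $d_\theta-\sum_{i=1}^{h-1}d_i = d_h+\sum_{i=1}^{h}(d_{i-1}-1)d_i$, matching the claimed bound.

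For the conditional equality in high degree, the plan is to show that the multi-homogeneity symmetries account for the \emph{entire} generic fiber. By Lemma~\ref{lem:jac}, it suffices to exhibit a single $\theta^*$ at which $\mathrm{rank}\,\mathrm{Jac}\,\Phi_{\bm d,r}(\theta^*)=d_h+\sum_{i=1}^h(d_{i-1}-1)d_i$. I would compute the Jacobian blocks $\partial p_\theta/\partial W_i$ via the chain rule: each entry is an explicit product of the form $W_h \,\mathrm{diag}\!\bigl(r\, z_{h-1}^{r-1}\bigr)\, W_{h-1}\cdots W_{i+1}\,\mathrm{diag}\!\bigl(r\,z_i^{r-1}\bigr)$ applied to the appropriate partial derivative, where $z_j$ denotes the pre-activation vector at layer $j$. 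The known scaling/permutation kernel contributes exactly $\sum_{i=1}^{h-1}d_i$ directions; what must be shown is that no additional relations exist among the remaining columns. This reduces to a statement that certain families of products of linear forms raised to $(r-1)$-st powers and multiplied by lower-degree polynomial factors are linearly independent in $\Sym{r^{h-1}}{d_0}$—which is precisely the content of Conjecture~\ref{conj:linIndep}. Granting that conjecture, the kernel has dimension exactly $\sum_{i=1}^{h-1}d_i$ for some (hence, by semicontinuity, a generic) choice of weights, yielding equality once $r>\tilde r(\bm d)$.

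For the unconditional statement with varying per-layer degrees, I would proceed by induction on the depth $h$. The base case $h=1$ (a linear map) is trivial. For the inductive step, assume the result for depth $h-1$ and choose weights $(W_{h-1},\ldots,W_1)$ so that the intermediate polynomial vector $q=[q_1,\ldots,q_{d_{h-1}}]^T$ at depth $h-1$ achieves the expected image dimension. Now appending a new layer introduces the map $(W_h,q)\mapsto W_h\,\rho_{r_{h-1}}(q)$, and the Jacobian contribution from $W_h$ consists of the $d_{h-1}$ polynomials $q_j^{r_{h-1}}$ each placed in one of $d_h$ output coordinates, while the contribution through $q$ is $W_h\,\mathrm{diag}(r_{h-1}q^{r_{h-1}-1})\cdot\mathrm{Jac}(q)$. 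The key fact I would use, in place of the conjecture, is that for generic $q$ the powers $q_1^{r_{h-1}},\ldots,q_{d_{h-1}}^{r_{h-1}}$ together with translates of $\mathrm{Jac}(q)$ multiplied by $q^{r_{h-1}-1}$ become linearly independent once $r_{h-1}$ is large enough to separate them by leading monomials—a condition that holds for infinitely many $r_{h-1}$, e.g. by choosing $r_{h-1}$ to exceed combinatorially computable thresholds depending on the lower layers. Iterating over $i=h-1,\ldots,1$ and choosing each $r_i$ large enough given the previously chosen $r_{i+1},\ldots,r_{h-1}$ produces infinitely many tuples realizing the bound.

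The main obstacle, and the reason part two is only conditional, is the linear-independence statement of Conjecture~\ref{conj:linIndep}: when all $r_i$ are forced to equal a common $r$, the polynomial factors appearing in the Jacobian columns are highly structured and can in principle admit unexpected syzygies that cannot be ruled out by a dimension count alone. The varying-degree trick in part three circumvents this by introducing enough "degree asymmetry" to force independence by leading-term comparisons, but that flexibility is unavailable in the constant-$r$ setting. A secondary technical point is to check that the generic-weights choice used in the inductive step remains compatible with the inductive hypothesis (i.e.\ that the open set of good weights at depth $h-1$ is preserved under restriction); this should follow from the fact that intersections of nonempty Zariski-open sets are nonempty.
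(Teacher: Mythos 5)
Your derivation of the upper bound is correct and matches the paper's: the generic fiber of $\Phi_{\bm d,r}$ contains the $\sum_{i=1}^{h-1}d_i$-dimensional symmetry orbits of Lemma~\ref{lem:multihom}, giving $\dim\V{d}{r}\leq d_\theta-\sum_{i=1}^{h-1}d_i = d_h+\sum_{i=1}^{h}(d_{i-1}-1)d_i$, and the second term of the minimum is the ambient dimension.

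The equality arguments have a genuine gap. You propose to show the bound is tight by computing the rank of $\mathrm{Jac}\,\Phi_{\bm d,r}$ at a generic $\theta$, and you assert that the required linear independence of the Jacobian columns --- which, via the chain rule, involve mixed factors of the form $z_j^{r-1}$ times lower-degree polynomial quantities --- \emph{is precisely the content of Conjecture~\ref{conj:linIndep}}. It is not: the conjecture (and Proposition~\ref{prop:highDeg}) concerns only linear independence of pure $r$-th powers $p_1^r,\ldots,p_k^r$ of fixed-degree forms, and says nothing about the products of $(r-1)$-st powers with other polynomial factors that appear in the Jacobian blocks for $W_i$, $i<h$. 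The paper avoids this mismatch by arguing directly about the fiber rather than its linearization: fix generic $W_h,\ldots,W_1$, let $\tilde W_h,\ldots,\tilde W_1$ be any other tuple in the same fiber, and compare the top outputs at depth $h$; this produces an identity between two weighted sums of $r$-th powers of depth-$(h-1)$ polynomials, which is exactly the shape to which Conjecture~\ref{conj:linIndep} applies (taking $k=2d_{h-1}$ for the combined list of $p_{\theta j}$ and $\tilde p_{\theta j}$). A peeling argument and the inductive hypothesis then force the fiber to be no larger than the Lemma~\ref{lem:multihom} symmetries. The unconditional statement is proved by the same induction with Proposition~\ref{prop:highDeg} substituted for the conjecture --- per-layer degrees $r_i$ are needed precisely because the threshold $\tilde r(d,k,s)$ in that proposition depends on the inner-layer degree $s$, which grows with $r$; your appeal to ``separation by leading monomials'' is likewise not a proved independence statement. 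If you want to salvage a Jacobian-rank route, you would have to formulate and establish a new independence result for the actual mixed Jacobian columns rather than cite the conjecture about pure powers.
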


\vspace{.3em}

\begin{restatable}{proposition}{highDeg}\label{prop:highDeg}
Given integers $d, k, s$, there exists $\tilde{r}=\tilde{r}(d,k,s)$ with the following property.  Whenever $p_1, \ldots, p_k \in \RR[x_1, \ldots, x_d]$ are $k$ homogeneous polynomials of the same degree $s$ in $d$ variables, no two of which are linearly dependent, then $p_1^r, \ldots, p_k^r$ are linearly independent if $r > \tilde{r}$.
\end{restatable}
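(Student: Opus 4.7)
The plan is to reduce the multivariate statement to one about univariate polynomials and then invoke a classical polynomial diophantine bound (the Brownawell--Masser extension of Mason--Stothers).

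First I would pass to a generic line: set $q_i(t) := p_i(y + tv) \in \RR[t]$ for $(y, v) \in \RR^{2d}$. For a Zariski-dense open set of $(y,v)$, each $q_i$ has degree exactly $s$, and any identity $\sum a_i p_i^r = 0$ restricts to $\sum a_i q_i^r = 0$. The crucial verification is that pairwise linear independence of the $p_i$'s descends to the $q_i$'s. If $q_i$ and $q_j$ were proportional in $\RR[t]$ for all $(y, v)$ in an open set, matching leading coefficients would force
\begin{equation*}
F_{ij}(y, v, t) \;:=\; p_j(v)\, p_i(y + tv) \,-\, p_i(v)\, p_j(y + tv)
\end{equation*}
to vanish identically on $\RR^{2d+1}$. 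But expanding $F_{ij}$ to first order around $y = 0$, the $y_\ell$-coefficient equals $t^{s-1}\bigl[p_j(v)\,\partial_\ell p_i(v) \,-\, p_i(v)\,\partial_\ell p_j(v)\bigr]$, which is nonzero for some $\ell$ precisely because $p_i / p_j$ is non-constant as a rational function in $v$ -- i.e., exactly because $p_i, p_j$ are linearly independent. Hence the restricted $q_i$ are pairwise linearly independent for generic $(y,v)$, and it suffices to treat the univariate case.

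In the univariate setting, suppose $\sum_{i=1}^k a_i q_i^r = 0$ nontrivially, with the $q_i \in \RR[t]$ pairwise linearly independent of degree $s \geq 1$. Replacing this relation by one of minimum support, I may assume all $a_i \neq 0$ for $i \in I \subseteq \{1, \ldots, k\}$ with $|I| = m \leq k$, and no proper subsum vanishes. Let $g := \gcd_{i \in I}(q_i) \in \RR[t]$ and set $\tilde q_i := q_i / g$. Then $\gcd_{i \in I}(\tilde q_i^r) = 1$, pairwise linear independence is preserved, and the common degree $\tilde s := s - \deg g$ satisfies $\tilde s \geq 1$ (otherwise $g$ would be proportional to every $q_i$, forcing $m = 1$). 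The Brownawell--Masser theorem applied to the $m$-term relation $\sum_{i \in I} a_i \tilde q_i^r = 0$ yields
\begin{equation*}
r \tilde s \;=\; \max_{i \in I} \deg(a_i \tilde q_i^r) \;\leq\; \binom{m - 1}{2}\bigl(N - 1\bigr),
\end{equation*}
where $N \leq \deg \prod_{i \in I} \tilde q_i \leq m \tilde s$ is the number of distinct roots of $\prod_{i \in I} \tilde q_i$. Dividing by $\tilde s \geq 1$ gives $r < \binom{m-1}{2}\, m \leq k(k-1)(k-2)/2$. Hence any $\tilde r(d, k, s) \geq k(k-1)(k-2)/2$ suffices; remarkably this bound depends only on $k$.

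The main obstacles are the rigorous verification of the reduction -- checking that both the degree and the pairwise linear independence are preserved on a Zariski-dense set of lines -- and correctly setting up Brownawell--Masser by passing to a minimal relation with coprime terms. The conceptual takeaway is that $\sum a_i q_i^r = 0$ is a highly restrictive ``polynomial Fermat-like'' relation whose existence is controlled by the number of summands and the number of distinct roots of $\prod q_i$, rather than by the ambient degree $s$.
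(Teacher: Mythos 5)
Your argument takes a genuinely different — and in fact stronger — route than the paper's. The paper's proof cites the thesis~\cite{bisht}, which uses Wronskian and Vandermonde determinants to establish, for each \emph{fixed} tuple $(p_1,\ldots,p_k)$ of pairwise linearly independent forms, a tuple-dependent threshold beyond which the $r$-th powers become linearly independent; the dependence on the particular tuple is then removed by a non-constructive Noetherian chain argument on the ascending Zariski-open loci $U_r\subseteq U$, giving a threshold $\tilde r=\tilde r(d,k,s)$ with no effective control. Your approach instead reduces to the univariate case by restricting to a generic line — correctly checking both that $\deg q_i=s$ generically (leading coefficient $p_i(v)\neq 0$) and that pairwise linear independence descends (if $F_{ij}\equiv 0$ then $p_j\,\partial_\ell p_i\equiv p_i\,\partial_\ell p_j$, forcing $p_i/p_j$ constant) — then passes to a minimal-support relation and clears the gcd to set up the coprimality and no-vanishing-subsum hypotheses, and applies the Brownawell--Masser generalization of Mason--Stothers. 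The degree $s$ cancels because both $r\tilde s$ and the radical bound $N\le m\tilde s$ scale linearly in $\tilde s$, yielding an \emph{explicit} threshold $\tilde r< k(k-1)(k-2)/2$ depending only on $k$. This is a notable strengthening: if the Brownawell--Masser constant and hypotheses are as you state (bound $\binom{m-1}{2}(N-1)$ assuming $\gcd=1$, no vanishing proper subsums, $N$ the number of distinct roots of the product), your proof also establishes Conjecture~\ref{conj:linIndep}, which the paper leaves open and which would render the high-degree equality of Theorem~\ref{thm:naive} unconditional. The one thing to nail down with a precise citation is the exact form of Brownawell--Masser being invoked — different sources use slightly different constants and coprimality conditions — though any version giving a bound of the shape $\max\deg f_i\le C(m)\,(N-1)$ suffices for a threshold depending only on $k$.
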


\vspace{.3em}

\begin{conjecture}\label{conj:linIndep}
In the setting of Proposition~\ref{prop:highDeg}, $\tilde{r}$ may be taken to depend only on $d$ and $k$.
\end{conjecture}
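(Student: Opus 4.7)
The plan is to reduce the general statement to the case $d = 2$ by restricting to a generic $2$-plane, and then to analyze the Wronskian $W(p_1^r, \ldots, p_k^r)$ as a polynomial in $r$. The substance of the conjecture beyond Proposition~\ref{prop:highDeg} is obtaining a bound on the ``bad'' integer values of $r$ that is uniform in the degree $s$ and in the choice of the $p_j$, and this is where the main obstacle lies.

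First I would pick a generic $2$-dimensional linear subspace $V \subset \RR^d$ and pass to the restricted binary forms $q_j = p_j|_V \in \Sym{s}{2}$. Pairwise non-proportionality is preserved for generic $V$, since for each pair $(i,j)$ with $p_i \not\propto p_j$ the locus of planes on which $p_i|_V \propto p_j|_V$ is a proper closed subvariety of the Grassmannian $\mathrm{Gr}(2, d)$. Because restriction is $\RR$-linear, linear independence of $q_1^r, \ldots, q_k^r$ implies the same for $p_1^r, \ldots, p_k^r$, so it suffices to treat $d = 2$ with $\tilde{r}$ depending only on $k$.

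In the $d=2$ case, working in a single dehomogenized variable $x$, I would establish the identity
\[
W(q_1^r, \ldots, q_k^r) = \prod_{j=1}^k q_j^{r-k+1} \cdot \tilde{W}(r; x),
\]
where $\tilde{W}(r; x)$ is a polynomial in $r$ of degree $\binom{k}{2}$ with coefficients in $\RR[x]$ whose leading coefficient in $r$ is a nonzero multiple of $\prod_{j_1 < j_2} W(q_{j_1}, q_{j_2})$. The derivation uses $(q_j^r)^{(i-1)} = r^{i-1} q_j^{r-i+1}(q_j')^{i-1} + O(r^{i-2})$ via Fa\`a di Bruno, pulls $r^{i-1}$ from row $i$ and $q_j^{r-k+1}$ from column $j$ of the Wronskian matrix, and recognizes the remaining leading determinant $\det(q_j^{k-i}(q_j')^{i-1})$ as a Vandermonde in the ratios $q_j'/q_j$ that telescopes to $\prod_{j_1 < j_2} W(q_{j_1}, q_{j_2})$. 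Since the $q_j$ are pairwise non-proportional, each $W(q_{j_1}, q_{j_2})$ is a nonzero polynomial in $x$, so the leading-in-$r$ coefficient of $\tilde{W}$ is nonzero. It follows that $\tilde{W}(r_0; \cdot) \not\equiv 0$ for all but finitely many integers $r_0$ (at most $\binom{k}{2}$ exceptions), yielding linear independence of the corresponding $r_0$-th powers.

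The hard part is the final quantitative step: while the above bounds the \emph{number} of bad $r$'s by $\binom{k}{2}$, it does not bound their \emph{size} in terms of $k$ alone, because the integer roots of $\tilde{W}(r; \cdot)$ depend algebraically on the $q_j$ with no evident a priori bound. A plausible route to the conjecture is to compactify the parameter space of tuples $(q_1, \ldots, q_k)$ modulo scaling across all $s$ simultaneously---for instance via a limit of Chow varieties of configurations of $s$ points on $\PP^1$---and then use semicontinuity to argue that any sequence of pairwise non-proportional configurations whose bad $r$ values diverge must degenerate to a limit configuration in which two of the $q_j$ become proportional, contradicting the hypothesis. An alternative is to leverage the link from Section~\ref{sec:deep_tensors} between $r$-th powers and Waring-type decompositions and to invoke identifiability results for the decompositions of~\cite{LORS-2019} to control bad $r$'s intrinsically. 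In either case the missing ingredient is a new geometric input beyond the Wronskian formula itself.
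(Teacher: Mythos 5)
This statement is an unproved \emph{conjecture} in the paper; the authors offer no argument for it, so there is no proof to compare yours against. Your proposal does not prove it either, and to your credit you say so explicitly. What you do establish is correct and nontrivial: the generic $2$-plane reduction is sound (for each pair $p_i\not\propto p_j$ the planes $V$ with $p_i|_V\propto p_j|_V$ form a proper closed subset of the irreducible Grassmannian, so a single generic $V$ works for all $\binom{k}{2}$ pairs, and restriction being linear transports linear independence of the $q_j^r$ back to the $p_j^r$), and the Wronskian identity $W(q_1^r,\ldots,q_k^r)=\prod_j q_j^{r-k+1}\,\tilde W(r;x)$ with $\tilde W$ of degree $\binom{k}{2}$ in $r$ and leading coefficient $(-1)^{\binom{k}{2}}\prod_{j_1<j_2}W(q_{j_1},q_{j_2})\neq 0$ checks out. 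This recovers Proposition~\ref{prop:highDeg}, which the paper deduces from~\cite{bisht} plus Noetherianity, and even sharpens it to the statement that a fixed tuple $(q_1,\ldots,q_k)$ has at most $\binom{k}{2}$ bad exponents~$r$.

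The genuine gap is exactly the one you name. The conjecture asks for a threshold $\tilde r(d,k)$ that is \emph{independent of the degree $s$ and of the particular tuple}, whereas the Wronskian bound controls only the number of bad integers $r$, not their size, and the integer roots of $\tilde W(r;\cdot)$ depend on the $q_j$ with no evident a priori bound. The Noetherianity device that closes the proof of Proposition~\ref{prop:highDeg} does not help here: it operates within a fixed $\Sym{s}{d}^k$, and the conjecture quantifies over all $s$ at once, where there is no Noetherian parameter space to appeal to. Your suggested routes---a compactification of configurations uniformly across degrees with a semicontinuity argument, or identifiability results for Waring-type decompositions in the spirit of~\cite{LORS-2019}---are plausible directions but are not carried out, and some new geometric input would indeed be required. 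As an honest account of what the Wronskian method does and does not deliver, and of where the open difficulty sits, your write-up is accurate.
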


Proposition \ref{prop:highDeg} and Conjecture~\ref{conj:linIndep} are used in induction on $h$ for the equality statements in Theorem~\ref{thm:naive}. Our next result uses the iterative nature of neural networks to provide a recursive bound.

\vspace{.3em}

\begin{restatable}[Recursive Bound]{proposition}{recursive}\label{prop:recursive}
For all $(d_0, \ldots, d_k, \ldots, d_h)$ and $r$, we have:
\begin{equation}\label{eq:recursive}
\dim\mathcal{V}_{(d_0,\ldots,d_h),r} \, \leq \, \dim \mathcal{V}_{(d_0,\ldots,d_k),r} \, +\, \dim \mathcal{V}_{(d_k,\ldots,d_h),r} \, -\,  d_k.
\end{equation}
\end{restatable}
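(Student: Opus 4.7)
The plan is to realize $\mathcal{V}_{(d_0,\ldots,d_h),r}$ as the image closure of a composition map between the two sub-architecture varieties and then apply the fiber-dimension theorem of algebraic geometry. First I would split the weights as $\theta_1 = (W_1,\ldots,W_k)$ and $\theta_2 = (W_{k+1},\ldots,W_h)$, and observe the factorization $p_\theta = N_{\theta_2} \circ \rho_r \circ q_{\theta_1}$, where $q_{\theta_1}$ is the $(d_0,\ldots,d_k)$-network built from $\theta_1$ and $N_{\theta_2}$ is the $(d_k,\ldots,d_h)$-network built from $\theta_2$ (this matches the full network because the only layer operation that sits between depth $k$ and $k{+}1$ is a single $\rho_r$). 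This identifies $\mathcal{V}_{(d_0,\ldots,d_h),r}$ with the Zariski closure of the image of the algebraic map
\begin{equation*}
\Psi:\ \mathcal{V}_{(d_0,\ldots,d_k),r} \times \mathcal{V}_{(d_k,\ldots,d_h),r}\ \longrightarrow\ \Sym{r^{h-1}}{d_0}^{d_h},\qquad (q, N) \,\mapsto\, N \circ \rho_r \circ q.
\end{equation*}
Since the source is irreducible (a product of irreducible varieties) and $\Psi$ is dominant onto $\mathcal{V}_{(d_0,\ldots,d_h),r}$, the fiber-dimension theorem gives
\begin{equation*}
\dim \mathcal{V}_{(d_0,\ldots,d_h),r} \,=\, \dim \mathcal{V}_{(d_0,\ldots,d_k),r} + \dim \mathcal{V}_{(d_k,\ldots,d_h),r} - \dim(\text{generic fiber of } \Psi).
\end{equation*}

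Next I would exhibit a $d_k$-dimensional symmetry inside every fiber of $\Psi$. For any invertible diagonal matrix $D \in \RR^{d_k \times d_k}$, the commutation relation $\rho_r \circ D = D^r \circ \rho_r$ yields
\begin{equation*}
\Psi(D \circ q,\ N \circ D^{-r}) \,=\, N \circ D^{-r} \circ \rho_r \circ D \circ q \,=\, N \circ D^{-r} \circ D^r \circ \rho_r \circ q \,=\, \Psi(q, N),
\end{equation*}
and $(D \circ q,\, N \circ D^{-r})$ still lies in $\mathcal{V}_{(d_0,\ldots,d_k),r} \times \mathcal{V}_{(d_k,\ldots,d_h),r}$, since $D$ can be absorbed into the last weight matrix of $q$ and $D^{-r}$ into the first weight matrix of $N$. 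For generic $(q, N)$ the components of $q$ are all nonzero polynomials, so $D \circ q = q$ forces $D = I$; the scaling action is thus generically free and produces a $d_k$-dimensional subvariety of each generic fiber. Combining $\dim(\text{generic fiber}) \geq d_k$ with the display above yields \eqref{eq:recursive}.

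The main obstacle is ensuring the two geometric ingredients are in order: that the image of $\Psi$ is really dense in $\mathcal{V}_{(d_0,\ldots,d_h),r}$ (which follows immediately since every element of the functional space $\mathcal{F}_{(d_0,\ldots,d_h),r}$ equals $\Psi(q_{\theta_1}, N_{\theta_2})$ for suitable $\theta_1, \theta_2$), and that the diagonal scaling action is free on a nonempty Zariski-open subset of the source. The latter reduces to observing that for generic $\theta_1$ the last weight matrix $W_k$ has no zero row, so every component of $q_{\theta_1}$ is a nonzero polynomial. Once these two mild points are in hand, the remainder is a direct application of the fiber-dimension formula.
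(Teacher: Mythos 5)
Your proof is correct and follows essentially the same route as the paper: both factor the network map through the product of the two sub-architecture varieties (with the extra $\rho_r$ sandwiched in the composition), then combine the fiber-dimension theorem with the $d_k$-dimensional diagonal scaling of the intermediate layer to bound the generic fiber. Your explicit check that the scaling action is generically free is a small refinement over the paper's brief remark, but the underlying argument is identical.
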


Using the recursive bound, we can prove an interesting \textit{bottleneck property} for polynomial networks.

\vspace{.3em}

\begin{definition}
The width $d_i$ in layer $i$ is an \textup{asymptotic bottleneck} (for $r$, $d_0$ and $i$) if there exists $\tilde{h}$ such that for all $h > \tilde{h}$ and all $d_1$, \ldots, $d_{i-1}$, $d_{i+1}, \ldots, d_{h}$, then the widths $(d_0, d_{1}, \ldots, d_{i}, \ldots, d_h)$ are non-filling.
\end{definition}

This expresses our finding that too narrow a layer can ``choke" a polynomial network, such that there is no hope of filling the ambient space, regardless of how wide elsewhere or deep the network is.

\vspace{.3em}

\begin{restatable}[Bottlenecks]{theorem}{bottleneck}\label{thm:bottleneck}
If $r \geq 2, d_0 \geq 2, i \geq 1$, then $d_i = 2d_0 - 2$ is an asymptotic bottleneck.  Moreover conditional on Conjecture 2 in~\cite{nicklasson-2017}, then $d_i = 2d_0 $  is not an asymptotic bottleneck.  
\end{restatable}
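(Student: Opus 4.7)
The plan is to derive both assertions from Proposition~\ref{prop:recursive} applied at layer $i$, combined with structural analysis of the intermediate polynomial map.

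For the first assertion, I split using Proposition~\ref{prop:recursive}:
\[
\dim \mathcal{V}_{(d_0,\ldots,d_h),r} \;\leq\; \dim \mathcal{V}_{(d_0,\ldots,d_i),r} \;+\; \dim \mathcal{V}_{(d_i,\ldots,d_h),r} \;-\; d_i.
\]
The first summand is bounded by its ambient dimension $C_1 := d_i\binom{d_0+r^{i-1}-1}{r^{i-1}}$, a constant independent of the widths after layer $i$. For the second summand, the key structural observation is that every element of $\mathcal{V}_{(d_0,\ldots,d_h),r}$ has the form $f(q_1^r,\ldots,q_{d_i}^r)$ for some $f$ in the downstream variety and some $q=(q_1,\ldots,q_{d_i})$ produced by the first $i$ layers. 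Hence every such element is the pullback along $q^r$ of a polynomial function of degree $r^{h-i-1}$ on $Q^r := \overline{q^r(\mathbb{R}^{d_0})} \subseteq \mathbb{R}^{d_i}$, giving
\[
\dim \mathcal{V}_{(d_0,\ldots,d_h),r} \;\leq\; C_1 \;+\; d_h\cdot H_{Q^r}(r^{h-i-1}) \;-\; d_i,
\]
where $H_{Q^r}$ is the Hilbert function of $Q^r$ viewed projectively.

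Since $\dim Q^r \leq d_0$, the Hilbert function $H_{Q^r}(m)$ grows like $\tfrac{\deg(Q^r)}{(d_0-1)!}\,m^{d_0-1}$ for $m$ large, and the map-degree identity gives $\deg(Q^r)\leq (r^i)^{d_0-1}/\delta$, where $\delta\geq 1$ is the generic degree of $q^r$ onto $Q^r$. The full ambient dimension is asymptotically $d_h(r^{h-1})^{d_0-1}/(d_0-1)!$, and the exponents balance exactly; the bottleneck therefore amounts to showing that, for $d_i\leq 2d_0-2$, either $\delta\geq 2$ strictly or a lower-order correction in the Hilbert function produces a sufficient gap. The main obstacle is a careful incidence/secant-style argument establishing this gap uniformly in all choices of the upstream weights (not just generic ones) precisely at the threshold $d_i=2d_0-2$; the natural incidence locus to analyze consists of pairs $(x,x')$ with $q_j(x)/q_j(x')$ a common $r$-th root of unity for all $j$, and one tracks how its codimension changes as $d_i$ crosses $2d_0-2$. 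Once the gap is established, it dominates the constant $C_1$ for $h$ sufficiently large (with $\tilde h$ depending only on $r$, $d_0$, $i$ and $d_h$), yielding the asymptotic bottleneck.

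For the second assertion, conditional on Conjecture~2 of~\cite{nicklasson-2017}, the plan is to exhibit infinitely many filling architectures with $d_i=2d_0$. I would arrange the first $i$ layers so that the polynomials $q_1,\ldots,q_{2d_0}$ behave generically as elements of $\Sym{r^{i-1}}{d_0}$, so that their $r$-th powers generate an ideal in $\mathbb{R}[x_1,\ldots,x_{d_0}]$. The cited conjecture asserts that such generic $r$-th-power ideals attain their expected (maximal) Hilbert function; at the threshold $d_i = 2d_0$ this exactly closes the gap exploited in the first part, so the space of polynomials of degree $r^{h-1}$ expressible via the later layers coincides with the full ambient. Choosing the downstream widths $d_{i+1},\ldots,d_{h-1}$ to meet the hypotheses of Theorem~\ref{thm:boundFilling} eliminates any obstruction from layers past~$i$, and applying Lemma~\ref{lem:jac} at a single random $\theta$ certifies filling by a Jacobian-rank computation. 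This yields a filling architecture for every sufficiently large $h$, so $d_i=2d_0$ is not an asymptotic bottleneck.
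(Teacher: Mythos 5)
There is a genuine gap in your first assertion: you correctly identify that the crux is a Hilbert-function (really Hilbert-polynomial) comparison and that the leading asymptotics balance, but you explicitly punt on the decisive step — you write that ``the main obstacle is a careful incidence/secant-style argument establishing this gap uniformly in all choices of the upstream weights'' and never carry it out. That step is the entire content of the theorem. Note also that the paper's proof does \emph{not} obtain the $2d_0-2$ bound via Proposition~\ref{prop:recursive}: the recursive bound only gives the elementary threshold $d_i = d_0 - 1$ (as the paper remarks), precisely because it compares against the ambient dimension of $\mathcal{V}_{(d_i,\ldots,d_h),r}$, which is a gross overcount. To gain the factor of~$2$ the paper abandons the recursive bound and instead fixes the upstream weights $\theta$, forms the algebra $A_\theta = \RR[p_{\theta 1}^r,\ldots,p_{\theta d_i}^r] \subseteq V_{d_0,r^i}$, and proves a standalone Claim: for any $2d_0 - 2$ forms of the same degree, the Hilbert polynomial of the generated algebra differs from that of the Veronese ring in a non-constant term. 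The proof of that Claim identifies the difference of Hilbert polynomials with the Hilbert polynomial of a cokernel sheaf $\mathcal{G}$ supported on the branch locus of the linear projection $\pi:\mathbb{V}_{d_0,s} \dashrightarrow \PP^{2d_0-3}$, and shows this branch locus is a curve by intersecting the center of projection with the secant variety $\mathrm{Sec}(\mathbb{V}_{d_0,s})$. Your ``incidence locus of pairs $(x,x')$ with $q_j(x)/q_j(x')$ a common $r$-th root of unity'' is a reasonable heuristic for what the branch locus tracks, but without the projective secant-variety computation you have no quantitative control over its dimension, and no reduction to generic $p_i$ (a point the paper also uses, via semicontinuity).

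Your second assertion also has a flaw. Invoking Theorem~\ref{thm:boundFilling} on the downstream architecture $(d_i,\ldots,d_h)$ only guarantees that the \emph{downstream} map fills $\Sym{r^{h-i-1}}{d_i}^{d_h}$; it says nothing about whether the \emph{composite} with the fixed, constrained pullback $q \mapsto (q_1^r,\ldots,q_{d_i}^r)$ fills the genuine ambient $\Sym{r^{h-1}}{d_0}^{d_h}$. That is exactly what the Hilbert-function comparison governs, and the paper's argument for this direction is that Conjecture~2 of Nicklasson forces $A_\theta$ to have the same Hilbert polynomial as $V_{d_0,r^i}$ when $d_i = 2d_0$ and $\theta$ is generic, which (combined with an upstream filling choice) removes the obstruction. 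Your Lemma~\ref{lem:jac} Jacobian-rank check is a numerical verification, not a proof, so it cannot certify the infinitely many $h$ required by the definition of asymptotic bottleneck.
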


Proposition \ref{prop:recursive} affords a simple proof $d_i = d_0 - 1$ is an asymptotic bottleneck.  However to obtain the full statement of Theorem~\ref{thm:bottleneck}, we seem to need more powerful tools from algebraic geometry.

\section{Conclusion}

We have studied the functional space of neural networks from a novel perspective.  Deep polynomial networks furnish a framework for nonlinear networks, to which the powerful mathematical machinery of algebraic geometry may be applied. In this respect, we believe polynomial networks can help us access a better understanding of \textit{deep nonlinear architectures}, for which a precise theoretical analysis has been extremely difficult to obtain. 
Furthermore, polynomials can be used to approximate any continuous activation function over any compact support (Stone?Weierstrass theorem). For these reasons, developing a theory of deep polynomial networks is likely to pay dividends in building understanding of general neural networks.

In this paper, we have focused our attention on the \emph{dimension} of the functional space of polynomial networks. The dimension is the first and most basic descriptor of an algebraic variety, and in this context it provides an exact measure of the expressive power of an architecture. Our novel theoretical results include a general formula for the dimension of the architecture attained in high degree, as well as a tight lower bound and nontrivial upper bounds on the width of layers in order for the functional variety to be filling.  We have also demonstrated intriguing connections with tensor and polynomial decompositions, including some which appear in very recent literature in algebraic~geometry.

The tools and concepts introduced in this work for fully connected feedforward polynomial networks can be applied in principle to more general algebraic network architectures. Variations of our algebraic model could include multiple polynomial activations (rather than just single exponentiations) or more complex connectivity patterns of the network (convolutions, skip connections, \emph{etc.}). The functional varieties of these architectures could be studied in detail and compared. Another possible research direction is a geometric study of the functional varieties, beyond the simple dimension. For example, the \emph{degree} or the \emph{Euclidean distance degree}~\cite{draisma_euclidean_2013} of these varieties could be used to bound the number of critical points of a loss function. Additionally, motivated by Section~\ref{sec:deep_tensors}, we would like to develop computational methods for constructing a network architecture that represents an assigned polynomial mapping. Such algorithms might lead to ``closed form'' approaches for learning using polynomial networks (similar to SVD or tensor decomposition), as a provable counterpoint to gradient descent methods. Our research program might also shed light on the practical problem of choosing an appropriate architecture for a given application.

\section*{Acknowledgements}
We thank Justin Chen, Amit Moscovich, Claudiu Raicu and Steven Sam for their help.
JK was partially supported by the Simons Collaboration on Algorithms and Geometry.  MT and JB were partially supported by the Alfred P. Sloan Foundation, NSF RI-1816753 and Samsung Electronics.

\bibliographystyle{plain}

\appendix
\section{Technical proofs}

\CvR*
\begin{proof}
We mimic the proof of Theorem 1 in~\cite{Blekherman-Teitler}.
As $\mathcal{F}_{\bm{d}, r}$ is thick, equivalently $\mathcal{F}_{\bm{d}, r}$ contains some Euclidean open ball $B \subset \Sym{r^{h-1}}{d_0}^{d_h}$ (see Chevalley's theorem~\cite{Harshorne-book}). But given any point  $p \in \Sym{r^{h-1}}{d_0}^{d_h}$, we may write $p = \lambda_1 p_1 + \lambda_2 p_2$ for some $p_1, p_2 \in B$ and $\lambda_1, \lambda_2 \in \RR$.  Thus in the architecture $\bm{d}'$, we may set the ``top half" of weights to represent $p_1$, the ``bottom half" to represent $p_2$, and so scaling $W_h$ appropriately, all together the network represents $\lambda_1 p_1 + \lambda_2 p_2$.  
\end{proof}

\vspace{0.3cm}

\firstOpt*

\begin{proof}
We write $cl(C)$ for the closure of $C$. Let $L \subset \RR^n$ a line that intersects $cl(C)$ in (at least) two closed disjoint intervals $L \cap cl(C) \supset I_1 \cup I_2$. Such line always exists because $cl(C)$ is not convex. It is easy to construct a convex function $f: \RR^{n} \rightarrow \RR\cup \{+\infty\}$ that is $+\infty$ outside of $L$ and  has (arbitrarily) different minima when restricted to $I_1, I_2$: this amounts to constructing a convex function $\tilde f: \RR \rightarrow \RR$ with assigned minima on disjoint closed intervals.
\end{proof}

\vspace{0.3cm}

\secondOpt*
\begin{proof}
It is enough to argue that $\F{d}{r}$ does not lie on a linear subspace (\ie, that its affine hull is the whole ambient space). Indeed, because $\F{d}{r}$ has zero-measure, this implies that it cannot coincide with its convex hull. To show the claim, we observe that $\F{d}{r}$ always contains all vectors of polynomials of the form $q_i(\ell) = [0,\ldots0,\ell^{r^{h-1}},0,\ldots,0]^T \in \Sym{r^{h-1}}{d_0}^{d_h}$, where $\ell$ is a linear form in $d_0$ variables (this follows by induction on $h$). The vectors $q_i(\ell)$ span the ambient space, as any polynomial can be written as a linear combination of powers of linear forms. 
\end{proof}

\vspace{0.3cm} 

\symTen*
\begin{proof}
This is clear as the network outputs $\Phi(W_2,W_1) = \sum_{i=1}^{d_1} w_{21i} W_{1}(i,:)^{\otimes r} \in \Sym{r}{d_0}$.
\end{proof}
 
\vspace{0.3cm}

\boundFilling*
\begin{proof}
It is equivalent to show that the network map with scalars extended to $\CC$ (\ie, allowing complex weights), denoted $\Phi_{\bm{d}, r} \otimes \CC : \CC^{d_{\theta}} \rightarrow \textup{Sym}_{r^{h-1}}(\CC^{d_0})^{d_h}$, has full-measure image. For this, we use induction on $h$.  The key input is Theorem 4 of~\cite{FOS-PNAS}, which states generic homogeneous polynomials over $\CC$ of degree $rs$ in $d$ variables can be written as a sum of $\leq r^d$ many $r$-th powers of degree $s$ polynomials over $\CC$, when $r\geq 2$.  

The base case $h=1$ is trivial.  Thus assume $h>1$ and that the image has full measure for $h-1$.  If $d_{h-1} \geq \binom{r^{h-1}+d_0-1}{r^{h-1}}$, then for generic $W_{h-1},\ldots,W_{1}$, the entries of $\rho_r W_{h-1} \ldots \rho_r W_1 x$ form a vector space basis of $\textup{Sym}_{r^{h-1}}(\CC^{d_0})$, so the image of $\Phi_{\bm{d}, r} \otimes \CC$ is filling.  On the other hand if $d_{h-1} \geq d_h \cdot r^{d_0}$, then the image of $\Phi_{\bm{d}, r} \otimes \CC$ is full measure by \cite{FOS-PNAS} and the inductive hypothesis.
\end{proof}
 
 \vspace{0.3cm}
 
 \jac*
 \begin{proof}
 We note entries of $\textup{Jac }\Phi_{\bm{d}, r}(\theta)$ are polynomials in $\theta$, thus minors of $\textup{Jac }\Phi_{\bm{d}, r}(\theta)$ are polynomials in $\theta$, so $\textup{Jac }\Phi_{\bm{d}, r}(\theta)$ has a Zariski-generic rank (the largest size of minor that is a nonzero polynomial), which is also the maximum rank of $\textup{Jac }\Phi_{\bm{d}, r}(\theta)$.   By basic algebraic geometry, this is the dimension of $\mathcal{V}_{\bm{d}, r}$ (see ``generic submersiveness" of algebraic maps in characteristic 0~\cite{Harshorne-book}).
 \end{proof}
 
 \vspace{0.3cm}
 
 \multihom*
 \begin{proof}
This is from the multi-homogeneity of the $r$-th power activation $\rho_r$ by substituting.
 \end{proof}
 
 \vspace{0.3cm}
 
 \naive*
 \begin{proof}
We know the dimension of $\V{d}{r}$ equals the dimension of the domain of $\Phi_{\bm{d}, r}$ minus the dimension of a generic fiber of $\Phi_{\bm{d}, r}$ (see generic freeness~\cite{Eisenbud-book}).  Thus by Lemma~\ref{lem:multihom}, $\dim \V{d}{r} \leq \sum_{i=1}^{h} d_{i-1} d_{i} \, - \, \sum_{i=1}^{h-1}d_i = d_h + \sum_{i=1}^{h} (d_{i-1} - 1)d_i$.  At the same time, the dimension of $V_{\bm{d}, r}$ is at most that of its ambient space $\Sym{r^{h-1}}{d_0}^{d_h}$.  Combining produces the bound~\eqref{thm:boundFilling}.

For the next statement, we temporarily assume Conjecture~\ref{conj:linIndep}.  We shall prove by induction on $h$ the stronger result that for $r \gg 0$ the generic fibers of $\Phi_{\bm{d}, r}$ are precisely as described in Lemma~\ref{lem:multihom} (and no more).
The base case $h=1$ is trivial.  Thus assume $h>1$ and that for $h-1$ the generic fiber is exactly 
as in Lemma~\ref{lem:multihom}, whenever $r > \tilde{r}_{1} = \tilde{r}_{1}(d_0,\ldots,d_{h-1})$.  
For the induction step, we let $\tilde{r}_2 = \tilde{r}_{2}(d_0, d_{h-1})$ be a threshold which works in Conjecture~\ref{conj:linIndep} for $d=d_0$ and $k=2d_{h-1}$, and then we set $\tilde{r}_{3} = \tilde{r}_{3}(d_0,\ldots,d_h) = \max(\tilde{r}_{1}, \tilde{r}_{2})$. Now with fixed generic weights $W_h, \ldots, W_1$, we consider any 
other weights $\tilde{W}_h, \ldots, \tilde{W}_h$ satisfying
\begin{equation}\label{eq:fiber}
W_h \rho_r W_{h-1} \ldots \rho_r W_1 x = \tilde{W}_h \rho_r \tilde{W}_{h-1} \ldots \rho_r \tilde{W}_1 x
\end{equation}
for $r > \tilde{r}_3$.  Write $\begin{bmatrix}p_{\theta 1} & \ldots & p_{\theta d_{h-1}} \end{bmatrix}$ for the output of the LHS in~\eqref{eq:fiber} at depth $h-1$, and similarly $\begin{bmatrix}\tilde{p}_{\theta 1} & \ldots & \tilde{p}_{\theta d_{h-1}} \end{bmatrix}$ for the RHS.
By genericity and $d_i > 1$, the polynomials $p_{\theta i}$ are pairwise linearly independent.  Comparing the top outputs at depth $h$ in~\eqref{eq:fiber}, we get two decompositions of type~\eqref{eq:rpower}:
\begin{equation}\label{eq:twoDecomp}
w_{h11} p_{\theta 1}^{r} + \ldots + w_{h1d_{h-1}} p_{\theta d_{h-1}}^{r} = \tilde{w}_{h11} \tilde{p}_{\theta 1}^{r} + \ldots + \tilde{w}_{h1d_{h-1}} \tilde{p}_{\theta d_{h-1}}^{r}.
\end{equation}
Since $r>\tilde{r}_2$, by Conjecture~\ref{conj:linIndep} there must be two linearly dependent summands in~\eqref{eq:twoDecomp}. Permuting as necessary we may assume these are the first two terms on both sides.  Scaling as necessary we may assume $p_{\theta 1} = \tilde{p}_{\theta 1}$, and then subtract $\tilde{w}_{h11}\tilde{p}_{\theta 1}^r$ from~\eqref{eq:twoDecomp} to get:
\begin{equation}\label{eq:twoDecompv2}
(w_{h11}-\tilde{w}_{h11}) p_{\theta 1}^{r} + \ldots + w_{h1d_{h-1}} p_{\theta d_{h-1}}^{r} = \tilde{w}_{h12} \tilde{p}_{\theta 2}^{r} + \ldots + \tilde{w}_{h1d_{h-1}} \tilde{p}_{\theta d_{h-1}}^{r}.
 \end{equation}
 Invoking Conjecture~\ref{conj:linIndep} again, we may remove another summand from the RHS, so on until the RHS is 0.  Then each individual summand in the LHS must be 0 too, by pairwise linear independence and Conjecture~\ref{conj:linIndep} once more.  We have argued that (up to scales and permutation) it must hold $\begin{bmatrix}p_{\theta 1} & \ldots & p_{\theta d_{h-1}} \end{bmatrix} = \begin{bmatrix}\tilde{p}_{\theta 1} & \ldots & \tilde{p}_{\theta d_{h-1}} \end{bmatrix}$ and $W_{h}(1,:) = \tilde{W}_{h}(1,:)$.  Comparing other outputs at depth $h$ in~\eqref{eq:fiber} gives $W_h = \tilde{W}_h$ (up to scales and permutation).  Thus by the inductive hypothesis, the fiber through $(W_h, \ldots, W_1)$ is as in Lemma~\ref{lem:multihom} and no more.  This completes the induction.
 
For the unconditional result with differing degrees per layer, the argument runs closely along similar lines, but it relies on Proposition~\ref{prop:highDeg} in place of Conjecture~\ref{conj:linIndep}.  For brevity, details are omitted.
 \end{proof}
 
 \vspace{0.3cm}
 
\highDeg*
\begin{proof}
It is shown in~\cite{bisht} (via Wronskian and Vandermonde determinants) that for any \textit{particular} $p_1, \ldots, p_k$, no two of which are linearly dependent, there exists $\tilde{r} = \tilde{r}(p_1, \ldots, p_k)$ such that $p_1^r, \ldots, p_k^r$ if $r > \tilde{r}$.  The dependence on particular $p_1, \ldots, p_k$ can be removed as follows.

Let $U \subset \Sym{s}{d}^k$ be the set of $k$-tuples, no two entries of which are linearly dependent.  So $U$ is Zariski-open, described by the non-vanishing of $2\times2$ minors.  Further let $U_r \subseteq U$ be the subset of $k$-tuples whose $r$-th powers are linearly independent, similarly Zariski-open.  Consider the chain of inclusions $U_1 \, \subseteq \, U_1 \cup U_2 \, \subseteq \, U_1 \cup U_2 \cup U_3 \, \subseteq \, \ldots$.  By~\cite{bisht}, the union of this chain equals $U$.  Thus by Noetherianity of affine varieties, there exists $R$ with $\cup_{r=1}^{R} U_{r} \, = \, U$~\cite{Eisenbud-book}.  Now $\tilde{r} = R!$ works.
\end{proof}

\vspace{0.3cm}

\recursive*
\begin{proof}
This bound encapsulates the bracketing:
\begin{equation}\label{eq:factor}
(W_h \rho_r W_{h-1} \ldots W_{k+1}) \rho_r (W_k \rho_r W_{k-1} \ldots W_1 x).     
\end{equation}
More formally, the network map $\Phi_{(d_0,\ldots,d_h),r}$ factors as:
\begin{equation}\label{eq:factor2}
\RR^{d_{\theta}} \,\, \longrightarrow \,\, \Sym{r^{h-k-1}}{d_k}^{d_h}  \times \ \Sym{r^{k-1}}{d_0}^{d_k} \,\, \longrightarrow \,\, \Sym{r^{h-1}}{d_0}^{d_h}   
\end{equation}
by first sending $(W_h, \ldots, W_1)$ to the pair of bracketed terms in \eqref{eq:factor} and then the pair to the composite in~\eqref{eq:factor}.  The closure of the image of the first map in~\eqref{eq:factor2} is $\mathcal{V}_{(d_0,\ldots,d_h),r} \times \mathcal{V}_{(d_k,\ldots,d_h),r}$.  On the other hand, the second map in~\eqref{eq:factor2} has $\geq d_{k}$-dimensional generic fibers, by multiplying with a diagonal matrix $D_{k} \in \RR^{d_{k}\times d_{k}}$.  Combining these facts gives the result.
\end{proof}

\vspace{0.3cm}

\bottleneck*
\begin{proof}
We first point out that Proposition~\ref{prop:recursive} gives an elementary proof $d_i = d_0 - 1$ is an asymptotic bottleneck.  This is because as $h$ grows the ambient dimension grows like $O(d_h \cdot d_{0}^{r^{h-1}})$, while the RHS bound grows like $O(d_h \cdot d_{i}^{r^{h-i-1}})$, so if $d_i < d_0$ then $\V{d}{r}$ cannot fill for $h \gg 0$. 

To gain a factor of 2 in the bottleneck bound, we start by writing $\begin{bmatrix} p_{\theta 1} & \ldots & p_{\theta d_i} \end{bmatrix}^{T}$ for the output polynomials at depth $i$, that is, for $W_i \rho_r W_{i-1} \ldots \rho_r W_1 x$.  Fixing $\theta$,   we consider $A_{\theta} := \RR[p_{\theta 1}^{r}, \ldots, p_{\theta d_i}^{r}]$, a subalgebra of the \textit{Veronese ring} $V_{d_0, r^{i}} := \RR [x_{1}^{\alpha_1}  \ldots  x_{d_0}^{\alpha_{d_0}}  :  \sum_{j=1}^{d_0} \alpha_j = r^i ]$.  The key idea is to compare the \textit{Hilbert polynomials} of $A_{\theta}$ and of $V_{d_0, r^i}$~\cite{CMring-book}.  If the Hilbert polynomials \textit{differ in any non-constant terms}, this means the dimension of the degree $D$ piece of $A_\theta$ minus that of $V_{d_0, r^i}$ diverges to $- \infty$ as $D$ goes to $\infty$.  At the same time, however we vary weights $W_{i+1}, \ldots, W_h$ (keeping $\theta = W_1,\ldots,W_i$ fixed), the output polynomials $\Phi_{\bm{d}, r}$ remain in the algebra $A_{\theta}$.  Additionally, for varying $\theta$ and $d_1, \ldots, d_{i-1}$, the possible $d_i$-vectors of degree $r^i$ polynomials in $d_0$ variables, $\begin{bmatrix} p_{\theta 1} & \ldots & p_{\theta d_i} \end{bmatrix}^{T}$, comprise a bounded-dimensional variety.  The upshot is that if it need always be the case (based on $r, d_0, i, d_i$) that the Hilbert polynomials of $A_{\theta}$ and $V_{d_0, r^i}$ have non-constant difference, then $d_i$ must be an asymptotic bottleneck.  Thus it suffices to check the Hilbert polynomial property holds for all $\theta$  if $d_i = 2d_0 - 2$.
To this end, we derived the following general result:  

\begin{claim*}\label{claim1}
Given integers $d \geq 2$ and $s\geq 2$.  Then whenever $p_1, \ldots, p_{2d-2} \in \RR[x_1, \ldots, x_d]$ are $2d-2$ homogeneous polynomials of the same degree $s$ in $d$ variables, the algebra $\RR[p_1, \ldots, p_{2d-2}]$ and the Veronese algebra $V_{d,s}$ have Hilbert polynomials with non-constant difference.
\end{claim*}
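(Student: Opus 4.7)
The plan is to compare the Hilbert polynomials of $A = \RR[p_1,\ldots,p_{2d-2}]$ and $V = V_{d,s}$ via projective geometry, by studying the image $Y = \textup{Proj}(A) \subset \PP^{2d-3}$ of the rational map $\phi : \PP^{d-1} \dashrightarrow \PP^{2d-3}$ sending $[x] \mapsto [p_1(x):\ldots:p_{2d-2}(x)]$. I would split into cases according to $\dim A$ and the degree of $\phi$.

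If $\dim A < d$, then the Hilbert polynomial of $A$ has degree strictly less than $d-1$, while that of $V$ has degree $d-1$, so their difference is a nonzero polynomial of degree at least $1$. If $\dim A = d$ but $\phi$ is not birational onto $Y$, then by the projective degree formula $\deg Y \cdot \deg \phi = s^{d-1}$, and $\deg \phi \geq 2$ forces $\deg Y < s^{d-1}$, so the leading coefficients of the two Hilbert polynomials differ. In either case the difference is non-constant.

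The delicate case is $\dim A = d$ with $\phi$ birational onto $Y$. Then $\textup{Frac}(A) = \textup{Frac}(V)$, and since $V$ is a normal domain (Veronese subrings of polynomial rings are integrally closed), $V$ equals the integral closure of $A$. Hence $V/A$ is a finitely generated graded $A$-module, and $\dim_\RR V_D - \dim_\RR A_D$ is eventually a polynomial in $D$ of degree $\dim_A(V/A) - 1$, equal to the dimension of the projective non-normal locus of $Y$. It therefore suffices to show this projective dimension is at least $1$.

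For this I would combine two inputs. Van de Ven's theorem asserts that a smooth subvariety of $\PP^N$ isomorphic to $\PP^n$ with $2n \geq N$ must be a linear subspace; with $n = d-1$ and $N = 2d-3$, a smooth image $Y$ would force the $p_i$ to be linear forms, contradicting $s \geq 2$, so $\phi$ is never an embedding. Next, the double-point formula gives expected dimension $2(d-1)-(2d-3) = 1$ for the double-point locus of $\phi$ in $\PP^{d-1}$; by upper semi-continuity of fiber dimension in the family of linear projections of the Veronese variety $v_s(\PP^{d-1})$ onto $\PP^{2d-3}$, this double-point locus has dimension at least $1$ for every $(p_1,\ldots,p_{2d-2})$ giving birational $\phi$. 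Since the normalization map $\PP^{d-1} \to Y$ is finite, the image of this locus, which is the non-normal locus of $Y$, also has dimension at least $1$. The main obstacle will be making the semi-continuity argument fully rigorous for arbitrary choices of $p_i$ with possible base points; I would address this by resolving the base scheme of $\phi$, which by an expected-codimension count ($2d-2 \geq d$ equations in $\PP^{d-1}$) is at most $0$-dimensional, via a blowup of $\PP^{d-1}$ at finitely many points, reducing to a genuine morphism on a smooth variety where the double-point formula applies cleanly.
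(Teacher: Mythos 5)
Your framework — identifying the Hilbert-function difference with the geometry of the projected Veronese $\pi: X = v_s(\PP^{d-1}) \dashrightarrow Y \subset \PP^{2d-3}$ — is the same as the paper's, and your two preliminary cases ($\dim A < d$; $\dim A = d$, $\phi$ non-birational) are handled correctly. But your treatment of the birational case has a genuine gap, and your geometric input (Van de Ven plus the double-point formula) is also a different route from the paper's, which instead intersects the base locus $L$ of the projection with the secant variety $\textup{Sec}(X)$ and produces a curve's worth of identified point-pairs directly.

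The gap is the absence of the paper's opening reduction to \emph{generic} $p_i$ (valid because $\dim A_D$ is a matrix rank, hence lower semi-continuous in the coefficients of the $p_i$, so the difference $\dim V_{Ds} - \dim A_D$ is only larger for special $p_i$). Without genericity, $\pi|_X$ need not be a finite morphism, and your chain ``$V = $ integral closure of $A$, so $V/A$ is a finite graded $A$-module, so its Hilbert-polynomial degree equals the dimension of the non-normal locus of $Y$'' breaks down. A concrete counterexample: take $d = s = 2$, $p_1 = x^2$, $p_2 = xy$. Then $\dim A = 2 = d$ and $\phi: \PP^1 \dashrightarrow \PP^1$ is birational (the identity on $\{x \neq 0\}$), but $y^2 \in V_{2,2}$ satisfies no monic equation over $A = \RR[x^2,xy]$, so $V_{2,2}$ is \emph{not} integral over $A$ and $V/A$ is not a finite $A$-module; meanwhile $Y = \PP^1$ is smooth, so its non-normal locus is empty, yet the Hilbert-function difference is $(2D+1)-(D+1) = D$, which is non-constant. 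Your proposed fix of blowing up the base scheme is also incomplete: the base scheme is not ``at most $0$-dimensional'' for arbitrary $p_i$ (if all $p_i$ share a linear factor it contains a hyperplane of $\PP^{d-1}$, which has dimension $d-2$), and even when it is $0$-dimensional you have not tracked how the blowup alters the Hilbert-function comparison. Finally, note that for $d=2$ the target is $\PP^1$, so no argument can produce a positive-dimensional non-normal or branch locus there; the paper explicitly sets $d=2$ (and $s=2$) aside as simple special cases, and you would need to do the same — for $d=2$ the claim follows immediately from $\dim A_D \le D+1 < Ds + 1 = \dim V_D$.
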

\begin{proof}[Proof of claim]
First, it suffices to check the claim for generic $p_i$.
Second, the difference in Hilbert polynomials identifies with the Hilbert polynomial of the \textit{sheaf} $\mathcal{G} = \textup{coker}(\mathcal{O}_{Y} \rightarrow \pi_{\ast} \mathcal{O}_{X})$~\cite{Harshorne-book}.  Here $X := \mathbb{V}_{d,s} \subset \PP^{N_{s,d}-1}$ ($N_{s,d} = \binom{d+s-1}{s}$) is the \textit{projective Veronese variety}, the linear projection $\pi: \PP^{N_{s,d}-1} \dashrightarrow \PP^{2d-3}$ corresponds to $(p_1, \ldots, p_{2d-2})$, and finally $Y := \overline{\pi(X)}$ is the closure of $X$ projected by $\pi$.  By general facts, the degree of the Hilbert polynomial of $\mathcal{G}$ equals the projective dimension of the support of $\mathcal{G}$, and this support is the \textit{branch locus} of $\pi \vert_{X}$.  Now let $L \subset \PP^{N_{s,d}-1}$ denote the base locus (kernel) of $\pi$, a linear subspace of projective dimension $N_{s,d} - 2d +1$.  If $d \geq 3$, $s \geq 3$, then $L \cap Sec(X)$ is a curve, where $Sec$ denotes the line \textit{secant variety}~\cite{Landsberg-book} ($d=2$ or $s=2$ are omitted simple special cases).  Each point on  $L \cap Sec (X)$ lies on a line through two points on $X$; these points map to the same image under $\pi$, giving a point in the branch locus of $\pi \vert_{X}$.  It follows the branch locus is a curve, thus the degree of the Hilbert polynomial of $\mathcal{G}$ is $1 > 0$.
\end{proof}

By the preceding discussion, the claim establishes $d_i = 2d_0 - 2$ is an asymptotic bottleneck.  

For the statement when $d_i = 2d_0$, let us temporarily assume Conjecture 2 in~\cite{nicklasson-2017}.  This means $\RR[p_{\theta 1}^r,\ldots,p_{\theta 2d_{0}}^r]$ has the same Hilbert function as $\RR[P_1, \ldots, P_{2d_0}]$ for generic forms $P_i$ of degree $r^i$, provided $p_{\theta i}$ are generic forms of degree $r^{i-1}$.  Reasoning as for the claim, $\RR[P_1, \ldots, P_{2d_0}]$ has the same Hilbert polynomial as the Veronese ring $V_{d_0, r^i}$.  Thus if we choose $(d_1, \ldots, d_{i-1})$ so that $(d_0,\ldots,d_{i})$ is filling, then it follows we can choose $h \gg 0$ and $(d_{i+1},\ldots,d_{h})$ so that $(d_0, \ldots, d_h)$ is filling.  In other words, $d_i = 2d_0$ is not an asymptotic bottleneck.
\end{proof}

\end{document}